\documentclass[runningheads]{llncs}
\usepackage{thm-restate}

\usepackage[T1]{fontenc}
\usepackage{comment}
\usepackage{color}
 \usepackage{longtable}

\usepackage{graphicx}
\usepackage{subcaption}
\usepackage[linesnumbered,ruled,vlined]{algorithm2e}
\usepackage{amsmath}
\usepackage{booktabs}
\usepackage{multirow}
\usepackage{array}
\usepackage{tabularx}
\usepackage{cleveref}
\usepackage{xspace}
\usepackage{wrapfig}
\usepackage[font=small,labelfont=bf]{caption}

\definecolor{gray}{rgb}{0.35,0.35,0.35}
\definecolor{blue}{rgb}{0,0,1}
\definecolor{red}{rgb}{1,0,0}
\definecolor{orange}{rgb}{0.75, 0.4, 0}
\definecolor{green}{rgb}{0.0, 0.5, 0.0}

\newcommand{\ignore}[1]{}

 \def\A{\mathcal{A}}

\newcommand{\stormr}{{\tt StoRMR}\xspace}
\newcommand{\rstormr}{{\tt R-StoRMR}\xspace}

\newcommand{\storage}{\ensuremath{W}\xspace}

\newcommand{\cols}{\ensuremath{C}\xspace}
\newcommand{\rows}{\ensuremath{R}\xspace}

\newcommand{\storearr}{\ensuremath{\A}\xspace}

\newcommand{\inc}{\ensuremath{A}\xspace}
\newcommand{\dep}{\ensuremath{D}\xspace}

\begin{document}
\title{Complete, Scalable, and Robust Prioritized Planning for Multi-Robot Ordered Storage\\ and Retrieval at Maximum Capacity%
}
\titlerunning{Prioritized Planning for Multi-Robot Storage and Retrieval}
\author{
William Zhang \and
Tzvika Geft  \and
Jingjin Yu  \and %
Kostas Bekris
}
\authorrunning{W. Zhang et al.}
\institute{
Department of Computer Science, Rutgers University, New Brunswick, NJ, USA
}
\maketitle              %

\vspace{-5mm}
\begin{abstract}
Automated warehouses face a fundamental trade-off between maximizing storage density and achieving high retrieval throughput. While puzzle-based storage (PBS) architectures increase capacity by eliminating aisles, coordinating multiple robots in these high-density spaces is computationally challenging. %
This paper formalizes the challenge through a novel multi-robot problem formulation for ordered storage and retrieval:
We consider rectangular 2D grids, where uniform-sized loads are first stored, up to full capacity, and subsequently retrieved according to prescribed arrival and departure sequences.
The main contribution of this work is an online prioritized multi-agent path planning algorithm for this problem.
The algorithm builds on prior work that constructs arrangements supporting sequential storage and retrieval, i.e., of one load at a time, without relocating loads.
By exploiting the structural invariants of such arrangements, we achieve the scalability of decoupled planning while guaranteeing complete, deadlock-free parallel execution even at full storage density.
Experiments demonstrate that the algorithm achieves near-linear improvement in makespan with respect to the number of robots, up to $C$ robots, where $C$ is the width of the grid's open side.
Furthermore, the algorithm supports robust storage arrangements that accommodate bounded uncertainty in the departure sequence, with negligible impact on execution makespan.

\keywords{Multi-robot Coordination  \and Path Planning \and Logistics.}
\end{abstract}

\section{Introduction}

\begin{wrapfigure}[11]{r}{0.35\textwidth}
\vspace{-.7in}
  \begin{center}
    \includegraphics[width=0.3\textwidth]{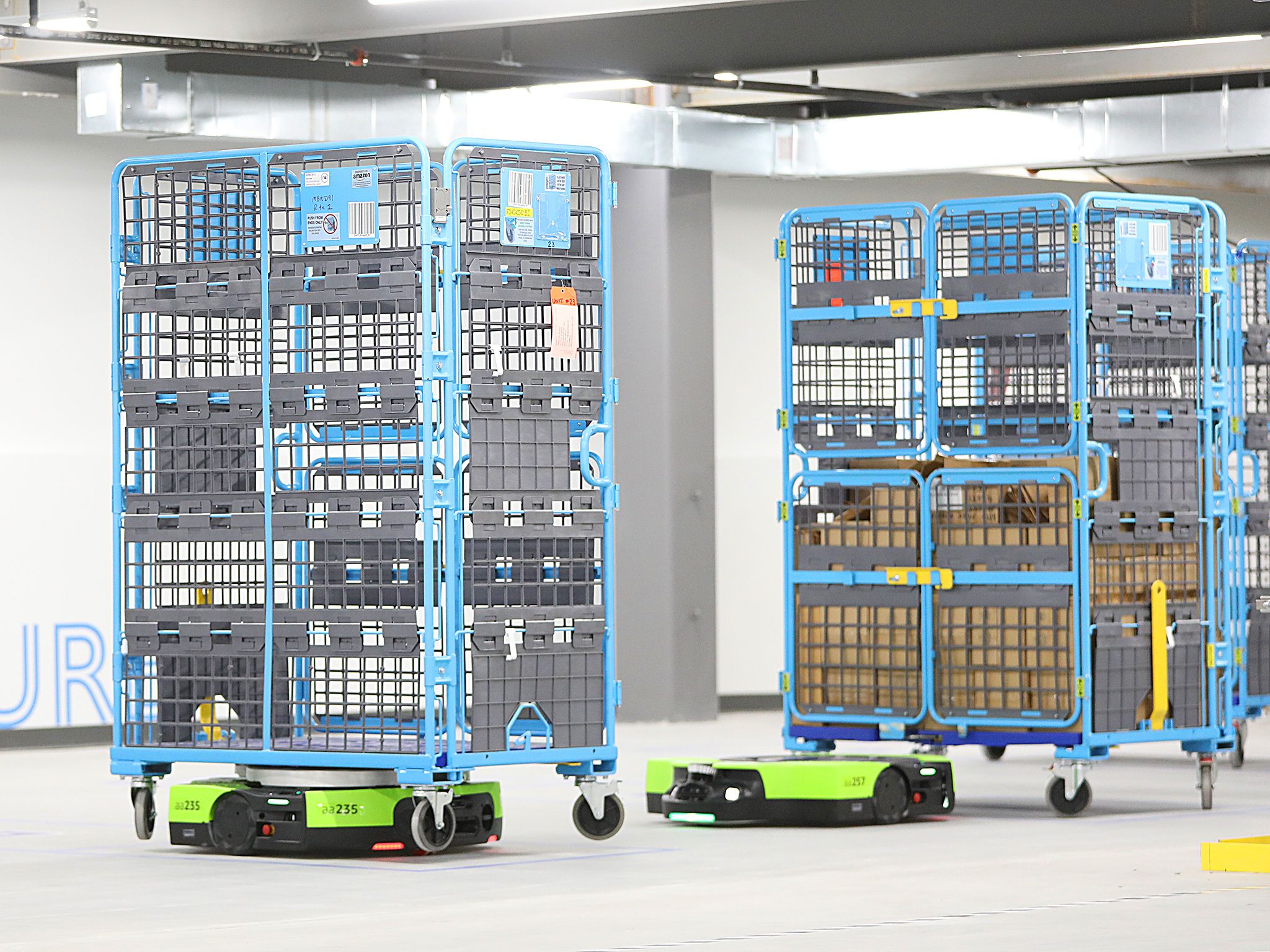}
  \end{center}
  \vspace{-.2in}
  \caption{Amazon's Proteus robots \cite{amazon_proteus} carry uniform loads and are example platforms that can be used for multi-robot ordered storage and retrieval in warehouses.}
  \vspace{-.3in}
  \label{fig:proteus}
\end{wrapfigure}

The rapid expansion of global e-commerce has placed unprecedented pressure on automated warehouses to maximize two competing, yet equally critical, metrics for return on investment: storage density and retrieval throughput. Modern facilities increasingly adopt high-density warehouse designs to reduce the physical footprint of inventory, especially as smaller facilities are being built in urban centers to expedite delivery times, where real estate costs are significant. Shelf-based Automated Storage and Retrieval Systems (AS/RS), such as Kiva-style ones~\cite{Wurman2008}, use robots like the Amazon Proteus (Fig.~\ref{fig:proteus}). While operationally efficient, they depend on dedicated aisles and buffer zones in the warehouse to facilitate robot navigation. This layout sacrifices valuable space that could otherwise be used for storage.

\begin{figure}[t]
    \centering
    \begin{subfigure}[b]{0.3\textwidth}
        \centering
        \includegraphics[width=\textwidth]{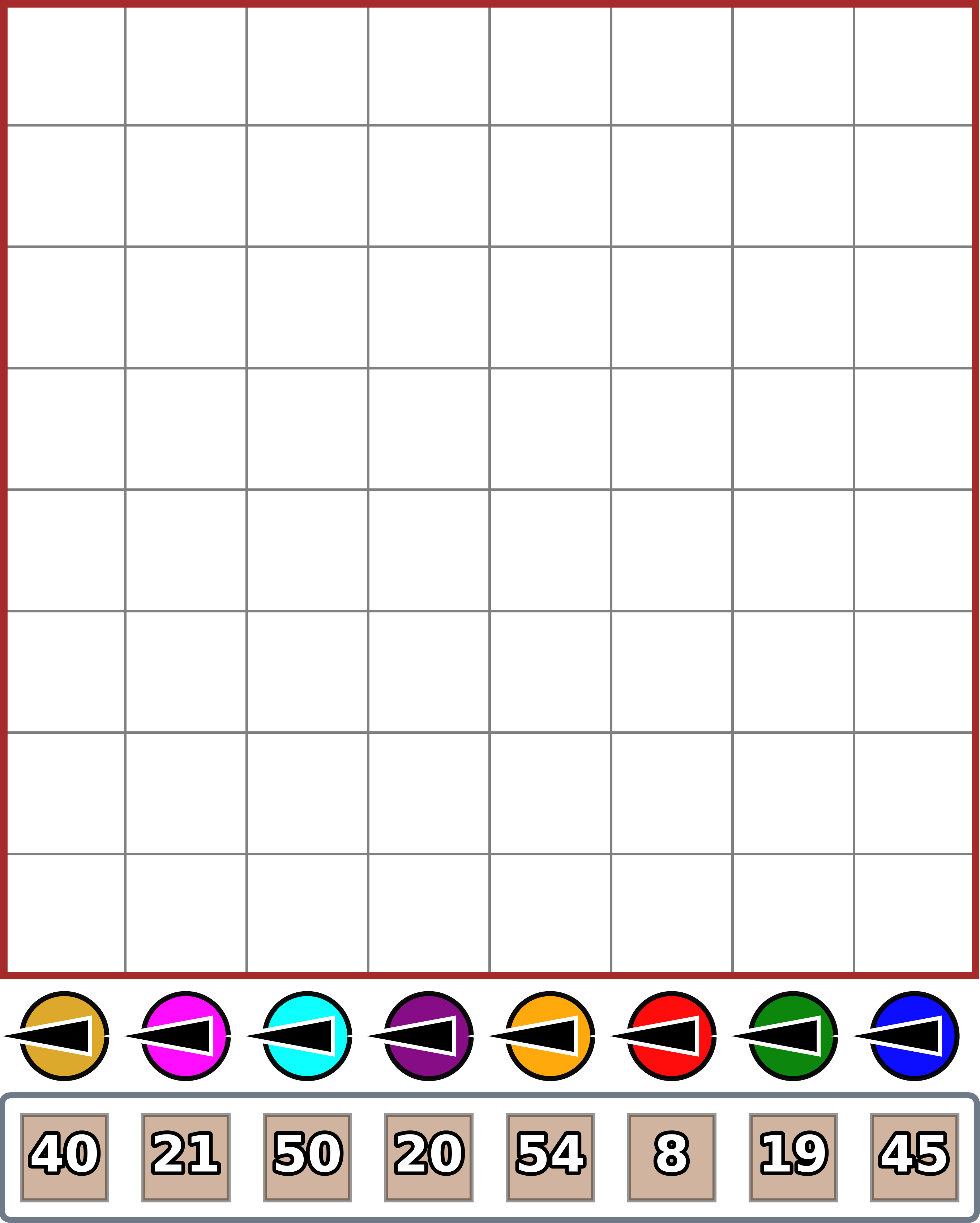}
        \label{fig:store_1}
    \end{subfigure}
    \quad
    \begin{subfigure}[b]{0.3\textwidth}
        \centering
        \includegraphics[width=\textwidth]{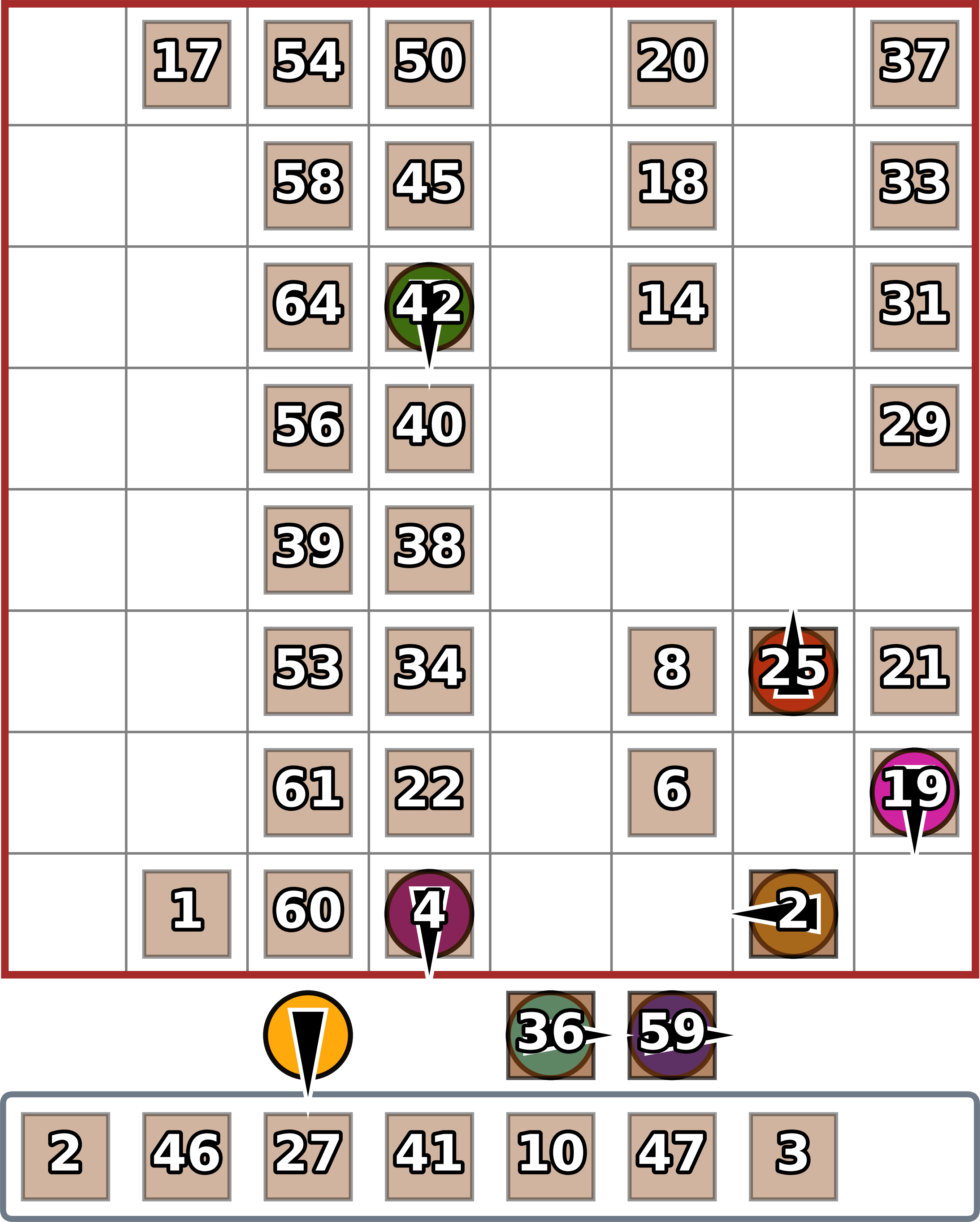}
        \label{fig:store_2}
    \end{subfigure}
    \quad
    \begin{subfigure}[b]{0.3\textwidth}
        \centering
        \includegraphics[width=\textwidth]{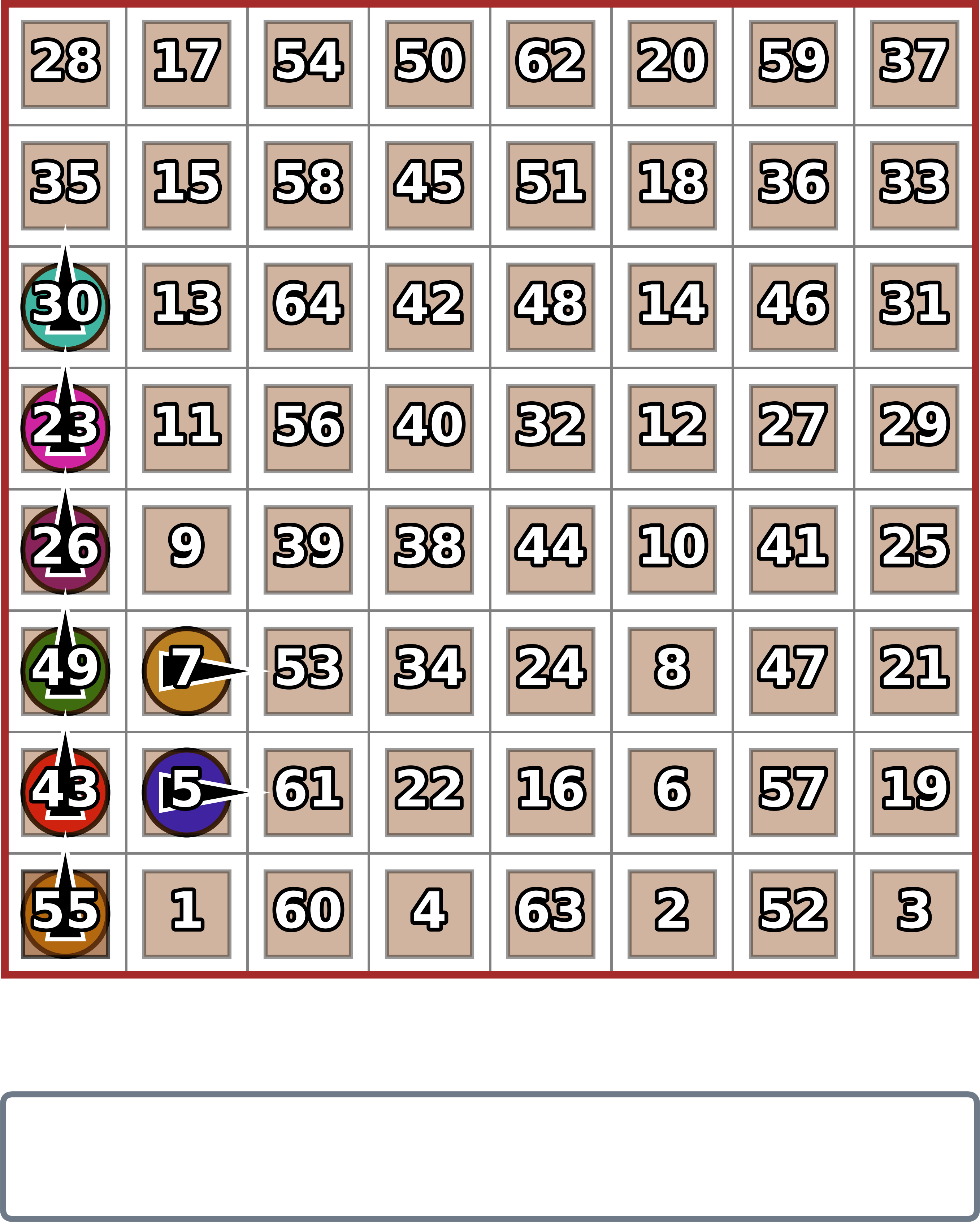}
        \label{fig:store_3}
    \end{subfigure}

    \begin{subfigure}[b]{0.3\textwidth}
        \centering
        \includegraphics[width=\textwidth]{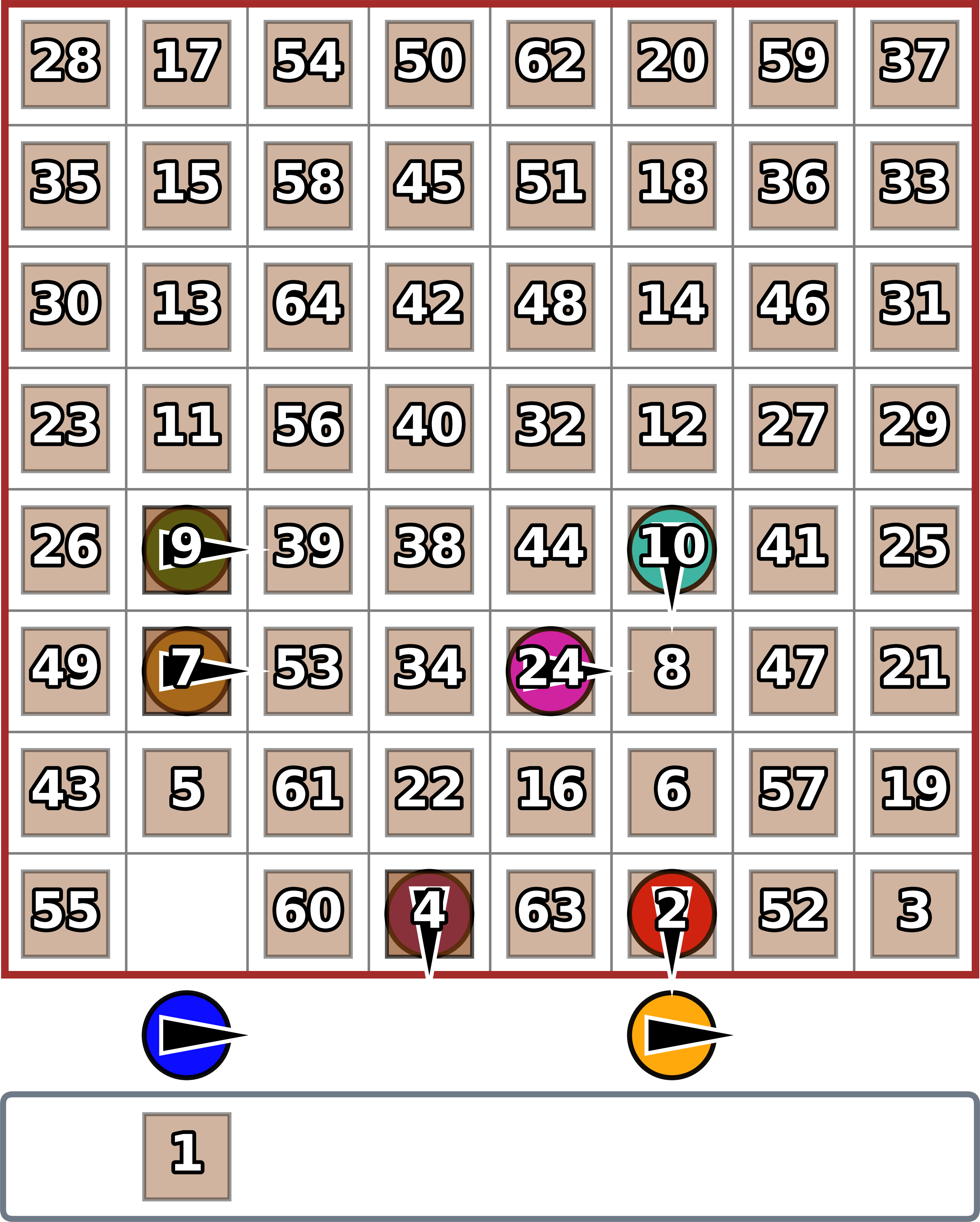}
        \label{fig:ret_1}
    \end{subfigure}
    \quad
    \begin{subfigure}[b]{0.3\textwidth}
        \centering
        \includegraphics[width=\textwidth]{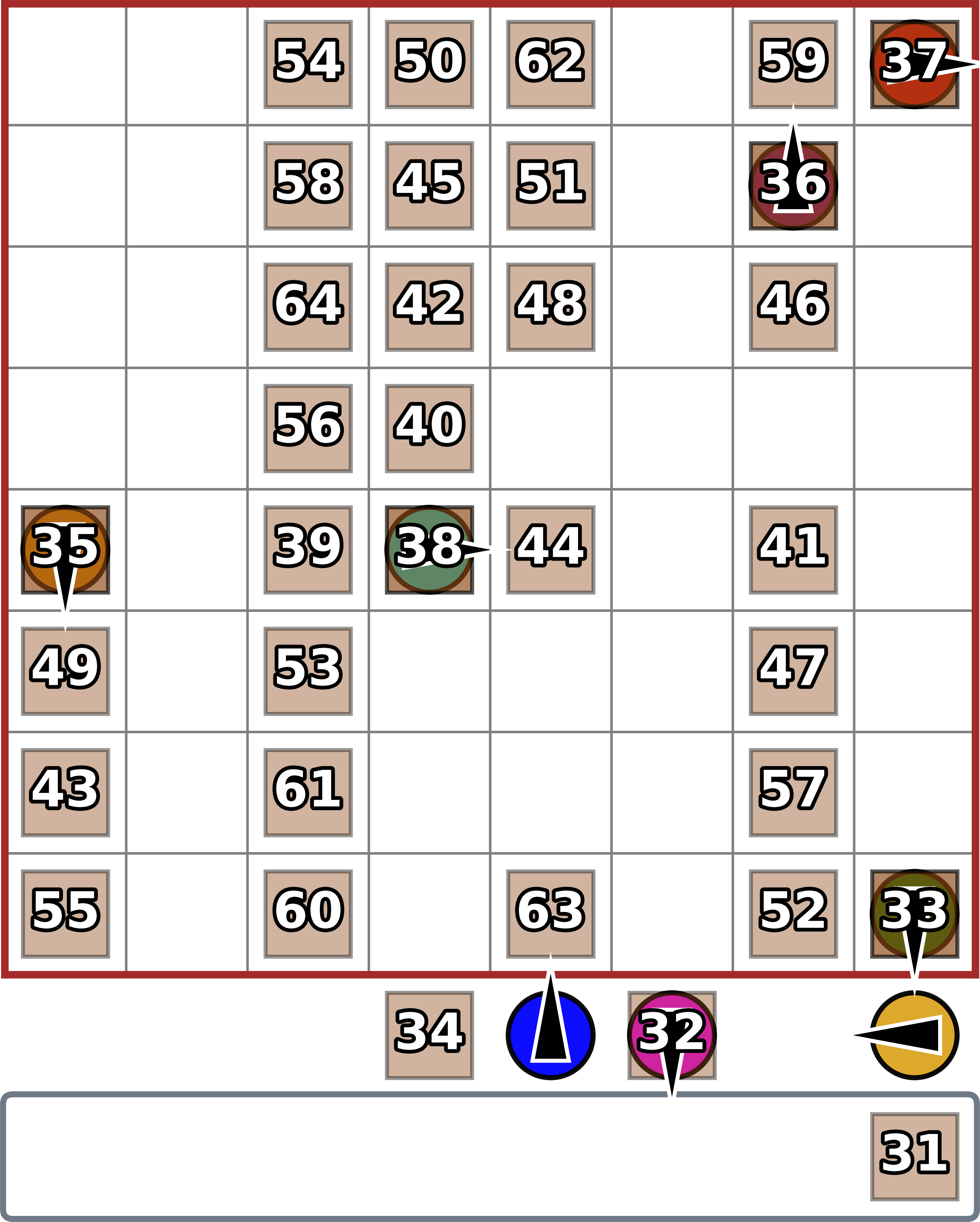}
        \label{fig:ret_2}
    \end{subfigure}
    \quad
    \begin{subfigure}[b]{0.3\textwidth}
        \centering
        \includegraphics[width=\textwidth]{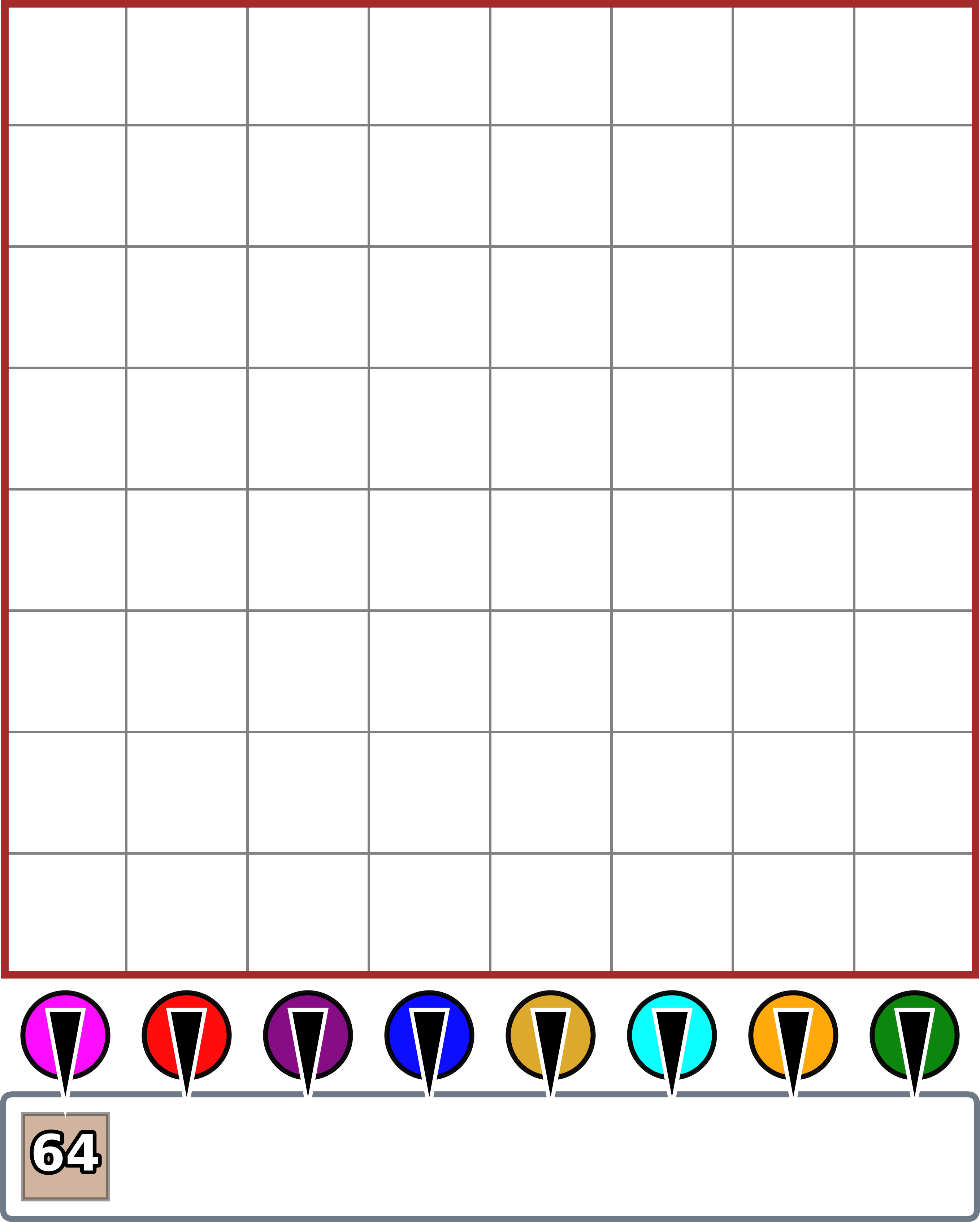}
        \label{fig:ret_3}
    \end{subfigure}

    \vspace{-.2in}

    \caption{Snapshots of a multi-robot storage and retrieval problem for a grid-based system with 8 robots. The storage area is depicted as an $8\times 8$ square. The conveyor belt is depicted as a $1\times 8$ rectangle below the storage area, where loads arrive and depart from left to right. Robots are depicted with colored circles with an arrow indicating direction. Loads are depicted with squares labeled per their departure order. Top left: an initially empty grid with an initial set of loads arriving via a conveyor belt. Top right: completion of the storage phase, with the warehouse at full capacity. Bottom left: first load retrieval. Bottom right: the final state where the last load is retrieved and moved to the conveyor belt.}
    \label{fig:intro_ex}
    \vspace{-.25in}
\end{figure}

To address these challenges, high-density storage systems have emerged, often modeled as Puzzle-Based Storage (PBS) \cite{gue2007puzzle}. This framework features a 2D rectangular grid, where each grid cell stores a uniform sized load. A load can be moved along cardinal directions via a collision-free path. In these environments, the storage grid acts like a dense sliding-tile puzzle, removing internal aisles, where loads are rearranged using a small number of empty cells to create paths for retrieval.
Coordinating multiple robots in these dense environments, however, is computationally challenging due to the high risk of deadlocks and limited maneuvering space.
This work builds on a model for grid-based storage that splits operations into two distinct phases: first, storing incoming items up to full capacity, and later, quickly retrieving them for transport. Such operations occur in facilities where goods must be stored before changing transport mode (e.g., from a ship to trucks in a port, or from a semi-truck to delivery vans in last-mile distribution centers). In these setups, it is critical to make the best use of available space and quickly serve outbound transportation vehicles as they arrive. 

Prior work, such as \stormr (Storage and Retrieval with Minimum Relocations)~\cite{stormr25}, has established the geometric feasibility of relocation-free arrangements for such grid-based storage systems. This means it is possible to store loads up to full capacity in a grid-based storage so that loads can be retrieved according to a scheduled departure sequence without any relocation, i.e., a rearrangement of other stored loads, under weak assumptions about the grid's width. This model has been extended to accommodate controlled uncertainty in the departure sequence, as in \rstormr (Robust \stormr)~\cite{aaai26}. These prior works, however, assume a sequential movement model in which only one robot moves a load at a time. Consequently, the execution-time benefits of these dense arrangements in a parallelized, multi-robot setting remain largely unexplored. 

This paper bridges the gap between relocation-minimizing storage arrangements and practical, efficient multi-robot execution as highlighted in Fig.~\ref{fig:intro_ex}. It introduces a novel multi-robot formulation for the Ordered Storage and Retrieval problem under maximum storage capacity. Then, this paper describes an online, prioritized Multi-Agent Path Finding (MAPF) algorithm to effectively coordinate multiple robots to execute the corresponding storage and retrieval sequences.
Unlike coupled, centralized planners that often time out in congested spaces due to the curse of dimensionality, the approach exploits specific layout properties to achieve efficiency via the proposed decoupled, prioritized approach. At the same time, it also guarantees completeness and prevents deadlocks even at $100\%$ load density, which is a rare case for prioritized MAPF planning methods, e.g., ~\cite{silver2005cooperative}.\\ 

\textbf{Contributions:} Overall, the primary contributions of this work are:
\vspace{-.05in}

\begin{itemize}
    \item It introduces a multi-robot formulation for ordered storage and retrieval at maximum capacity.
    \item It proposes an asynchronous, online prioritized planning algorithm that enables continuous task execution for relatively large-scale instances.
    It provides theoretical arguments that the algorithm remains complete by utilizing the geometric properties of relocation-free storage arrangements.
    \item Extensive experiments on grids of size up to $30 \times 30$  demonstrate the algorithm's \emph{efficiency and scalability}, which achieves near-linear makespan reduction as the robot count increases. The planner maintains low computational runtime per task, making it suitable for online, real-time operation. 
    \item The evaluation also shows that the cost of handling \emph{departure uncertainty} is negligible; \rstormr storage arrangements provide safety against controlled sequence perturbations without significant penalties to execution speed.
    \item In terms of \emph{solution quality}, the algorithm exhibits low makespan suboptimality when compared to a theoretically optimal (but non-scalable) centralized, coupled planner. 
\end{itemize}

\section{Related Work}

Significant progress has been made in Multi-Agent Path Finding (MAPF) and Multi-Agent Warehouse Rearrangement (MAWR). The combination, however, of efficient storage arrangements in Puzzle-Based Storage (PBS) and MAPF techniques has received less attention. In other words, the intersection of full density, strict arrival and departure sequencing, and multi-robot coordination remains a largely unexplored topic.

\textbf{High-Density Storage and Retrieval.}
High-density storage systems, often formalized as Puzzle-Based Storage (PBS), were introduced to maximize space utilization by eliminating permanent aisles. Work has established the framework for PBS by modeling the storage grid on the classic ``15-puzzle'' game~\cite{gue2007puzzle}. In this model, inventory items are treated as tiles that must navigate a dense grid using a limited number of empty cells, known as ``escorts,'' to reach an input-output point. Building on this foundation, subsequent efforts extended the analytical framework by deriving closed-form expressions for retrieval times in systems with multiple, randomly distributed escorts~\cite{kota2015}. While PBS systems often allow freedom in retrieval order~\cite{Mirzaei2017,He2023}, this flexibility does not address scenarios requiring strict sequencing.

To address sequencing constraints, the Block Relocation Problem (BRP) models stack-based storage where items must be retrieved in a predefined order. Research has proven that the BRP is NP-hard and developed algorithms to minimize the relocation of blocking items~\cite{caserta2020}. Under uncertainty, variants such as the Parallel Stack Loading Problem focus on loading phases where the precise retrieval sequence is unknown~\cite{boge}.

\textbf{Multi-Agent Path Finding (MAPF).}
MAPF is a key underlying problem in our setting, seeking collision-free paths for multiple robots between specified start and goal vertices on a graph.
MAPF is NP-hard already on 2D grids~\cite{GH2021AAMAS,MAPF-socs23}.
A popular MAPF algorithm is Conflict-Based Search (CBS)~\cite{SHARON201540}, which speeds up standard A* by coupling agents only when conflicts arise. While CBS provides strong theoretical foundations, it often suffers from scalability issues in congested environments. To mitigate this, Priority Inheritance with Backtracking (PIBT)~\cite{PiBT} introduces reactive, priority-driven mechanisms. PIBT excels in dense settings by allowing blocked agents to inherit priority from their neighbors, effectively resolving local deadlocks. Previous approaches like Push and Swap~\cite{Luna2011} guaranteed completeness in dense settings but lacked solution quality. More recently, Large Neighborhood Search (LNS) algorithms~\cite{Li2022LNS2} have emerged as an effective approach to rapidly repair suboptimal paths in congested grids.

The coordination strategy proposed in this work aligns with the principles of Lifelong Multi-Agent Path Finding~\cite{lifelong}, which enables continuous task execution by iteratively updating paths. While traditional prioritized solvers rely on environments with sufficient buffer space to resolve conflicts, the proposed approach adapts these methodologies to the unique constraints of 100\% density, a domain theoretically grounded in pebble motion on graphs~\cite{Kornhauser1984}.

\textbf{Multi-Agent Warehouse Rearrangement.}
The Block Rearrangement Problem (BRaP)~\cite{Fu2026BRaP} investigates symbolic planning for moving blocks in extremely dense grids, treating loads as active agents to simplify coordination. The setting in this work differs from BRaP by utilizing a two-level height model, akin to Kiva-style AMR systems~\cite{Wurman2008}. In the two-level architecture, autonomous robots (AMRs) navigate beneath storage loads. This introduces unique constraints where pickup and drop-off actions must be explicitly modeled to effectuate load movement, unlike the sliding-tile abstraction used in BRaP.

Recent developments have led to solvers specifically for this two-height model. The Multi-Agent Warehouse Rearrangement (MAWR) problem incorporates an initial layout, movable obstacles, and agents. The DD-MAPD framework for MAWR achieves scalability by decomposing shelf trajectories from robot tasks, though it requires ``well-formed'' environments with sufficient buffer space to guarantee completeness~\cite{Li2023DoubleDeck}. 

The MARPF approach employs an ILP-based solution to actively relocate non-target racks as movable obstacles, but it primarily focuses on path-clearing for specific targets rather than global arrangement~\cite{marpf}. NAT-CBS introduces a makespan-optimal, coupled algorithm for MAWR by integrating a high-level obstacle planner with a low-level anonymous MAPF flow network, yet it remains computationally intensive in high-occupancy settings~\cite{socs2025_sherma}. 

Recent work has explored rearrangement-free retrieval in fully packed grids for two-step storage and retrieval setups~\cite{stormr25}, including under uncertainty for the departure sequence~\cite{aaai26}. These efforts motivate the current work but do not address the multi-agent coordination aspects required for fast and effective storage and retrieval in practice.

\section{Problem Formulation}

\subsection{System Setup}

\textbf{Environment and Layout.}
We consider a rectangular storage area $W$ consisting of $R$ rows and $C$ columns of grid cells. Two special rows lie below $W$ to facilitate storage and retrieval of loads: the \textbf{I/O row} at index $R+1$, used as a temporary buffer, and the \textbf{conveyor} at index $R+2$, which supplies arriving loads and receives departing loads sequentially.
Robots are not permitted on the conveyor. The I/O row and the conveyor are carefully designed to enable lock-free multi-robot operations.

There are $n=R\cdot C$ distinct loads labeled $1, \dots, n$, and $m \le \cols$ robots. Each cell can contain at most one load and at most one robot.
Each robot can carry at most one load. An \emph{arrangement} $\storearr$ of the loads is an injective mapping that specifies a cell in $W$ (i.e., a row and a column) for each load. 

\textbf{Robot Actions and Movement.} Each robot is oriented along one of the four cardinal directions. Time is discretized, and at each timestep, a robot can perform one of the following actions: \textbf{move} (forward one cell in the robot's current orientation), \textbf{rotate} (90-degrees CW or CCW), \textbf{pick up}, or \textbf{drop off} a load in the same cell, and \textbf{wait}. 

While standard MAPF abstracts agents as points, explicitly modeling rotation captures the reality of physical robots in tight storage grids. At the same time, however, turning actions make the problem more challenging, especially in dense environments.

A load moves only while it is carried by a robot and is stationary once dropped off, except for moving to/from the conveyor (explained below).
A two-level height model is adopted in which robots not carrying a load may pass beneath stationary loads without colliding with them.

The following collisions must be avoided: (1) \textbf{Positional collisions}, where two entities of the same type (two robots or two loads) occupy the same cell in the same timestep, and (2) \textbf{Directional collisions}, where a robot enters a cell at time $t$ that was occupied by another robot at $t-1$, unless both robots are moving in the same cardinal direction (i.e., ``train'' movement is permitted, but orthogonal and swapping motions are prohibited).

\begin{figure}[t]
\centering
\includegraphics[width=0.99\textwidth]{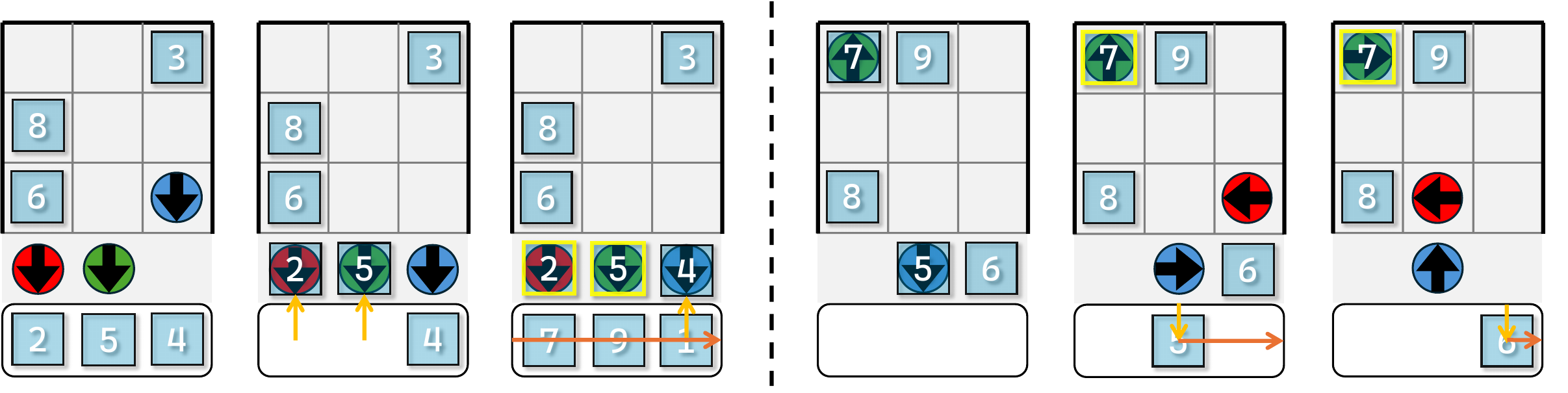}
\put(-325, -10){($a$)}
\put(-270, -10){($b$)}
\put(-213, -10){($c$)}
\put(-147, -10){($d$)}
\put(-90, -10){($e$)}
\put(-35, -10){($f$)}
\caption{Robots are represented by circles with arrows indicating direction. Loads are indicated by labeled squares, where the labels correspond to the departure sequence $D=(1,2,\dots,9)$ without loss of generality. Yellow borders indicate a load currently held by a robot. The left sequence shows three frames that highlight conveyor dynamics during the storage phase for the arrival sequence $\inc=(8,3,6,4,5,2,1,9,7)$. Frame (b) shows the conveyor pushing $2$ and $5$ to the I/O row in one timestep. Frame (c) shows a conveyor pushing $4$ to the I/O row, followed by a subsequent batch of $(1,9,7)$ moved into the conveyor row. During the retrieval phase, frame (d) shows loads $5$ and $6$ on the I/O row. In (e), load $5$ is first received by the conveyor given the departure order \dep and instantly departs to the right. Then load $6$ follows (f).}
\label{fig:warehouse_layout}
\end{figure}
\subsection{Problem Definition}

\textbf{Operational Phases.}
The system operates in two phases.
In the \textbf{storage phase}, loads arrive via the conveyor in ordered batches of size $C$ (one load per column) according to the arrival sequence $\inc=(a_1,\ldots,a_n)$.
To decouple the conveyor's motion from the robots' motion, assume that the conveyor operates as an independent unit (see Fig.~\ref{fig:warehouse_layout}). If the entire conveyor row is empty, the conveyor places the next batch of $C$ loads from left to right on the conveyor row instantly. This batching structure exposes multiple loads simultaneously, enabling the robots to access them in parallel. The conveyor pushes any load on it onto the I/O row in one timestep for robot pickup. In the \textbf{retrieval phase}, loads can be delivered to the I/O row for departure. The conveyor can receive loads from the I/O row in one timestep. The conveyor strictly follows the departure order \dep. Without loss of generality, $\dep$ is defined by $\dep=(1, 2, \dots, n)$.

A \emph{plan} $\Pi$ is a sequence of joint actions $\langle \mathbf{a}_t, c_t \rangle$ for $t = 1, \dots, T$, where $\mathbf{a}_t = \{u_{1,t}, \dots, u_{m,t}\}$ is the set of actions performed by the robots and $c_t$ denotes the conveyor action.
A plan is \emph{collision-free} if its actions avoid motion with collisions as defined above.
The \emph{makespan} of a plan is the total number of timesteps $T$.

\textbf{Formal Problem Statement.}
The \textbf{storage problem} is defined by the tuple $\langle W, \mathcal{R}, \storearr, \inc \rangle$, where $\mathcal{R}$ is the initial robot configuration, $\storearr$ is the target storage arrangement, and $\inc$ is the arrival sequence. The \textbf{retrieval problem} is defined by $\langle W, \mathcal{R}, \storearr_{\text{init}}, \dep \rangle$, where $\storearr_{\text{init}}$ is the load arrangement after storage at full capacity and $\dep$ is the departure sequence.

In both cases, the output is a collision-free plan $\Pi$ that transforms the system to the target configuration. The plan can be generated offline or executed online. The objective is to minimize the makespan $T$.

\section{Preliminaries}
\label{sec:preliminaries}

The proposed multi-robot approach builds upon the guarantees provided by sequential relocation-free storage arrangements. An arrangement \storearr is an injective mapping $\{1,\dots , n\} \rightarrow \{1, \dots , \rows\} \times \{1,\dots , \cols\}$ assigning each load to a unique grid cell~\cite{stormr25,aaai26}. The notation $\storearr[\ell]$ denotes the cell to which $\ell$ is mapped.

\begin{definition}
An arrangement $\storearr$ satisfies an arrival (or departure) sequence if every load in that sequence can be stored (or retrieved) sequentially without moving any other load, i.e., without relocations.    
\end{definition}

\begin{figure}[t]
\centering
\includegraphics[width=0.8\textwidth]{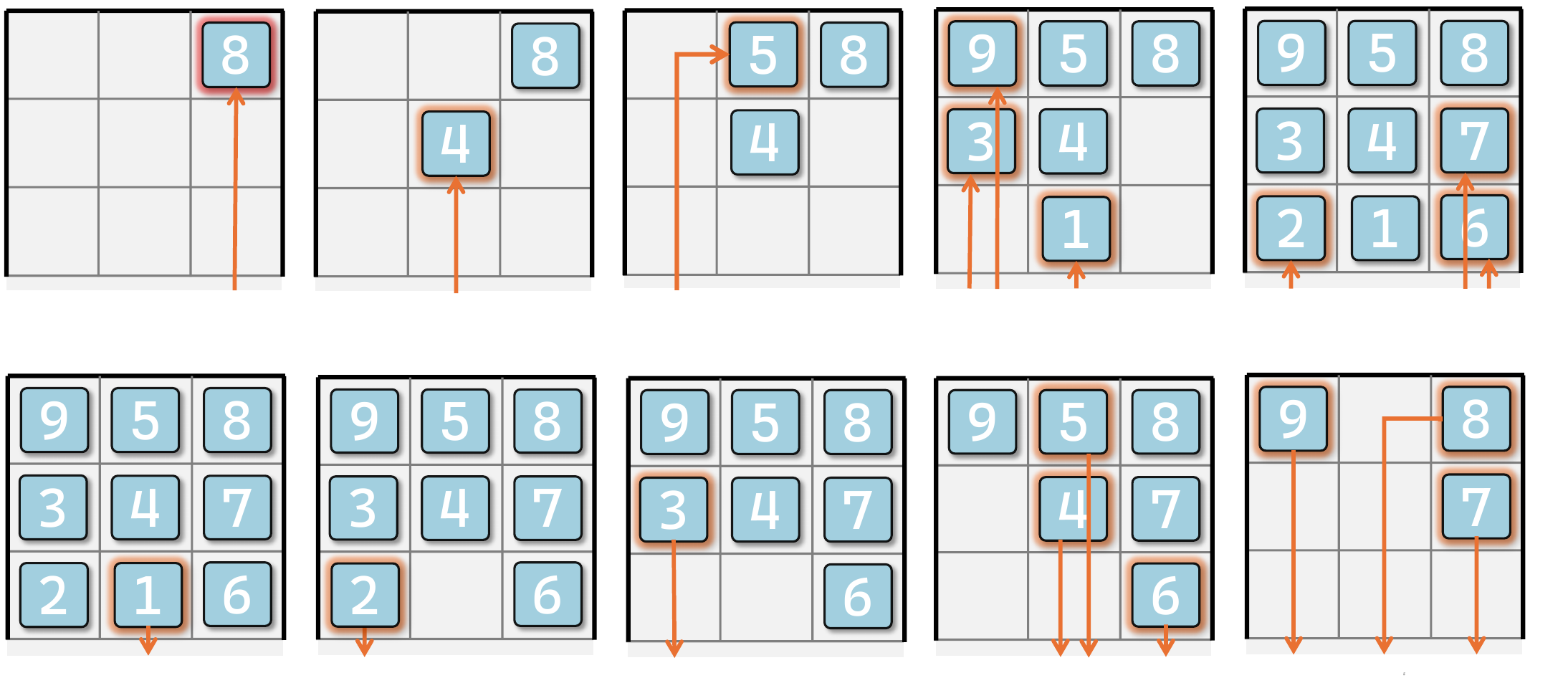}
\caption{An arrangement that satisfies $\inc=(8,4,5,9, 3,1,7,6,2)$ and $\dep=(1,2,\ldots,9)$.
The storage sequence is shown at the top, and the retrieval sequence is shown at the bottom (both from left to right).
Each load is stored or retrieved sequentially in the specified order via a collision-free path. The figure is based on~\cite{stormr25,aaai26}.} 
\label{fig:arrangement}
\end{figure}

Geometrically, satisfiability implies the existence of a collision-free path from the I/O row to the target cell (or vice-versa), avoiding specific subsets of loads:
\begin{itemize}
    \item \textbf{Storage:} For an arrival sequence $\inc=(a_1,\dots, a_n)$, the path for storing load $a_j$ from the I/O row must be collision-free given all previously placed loads $\{a_1, \dots, a_{j-1}\}$.
    \item \textbf{Retrieval:} For a departure sequence $\dep=(d_1,\dots, d_n)$, the path for retrieving load $d_j$ and moving it to the I/O must be collision-free given all loads scheduled for later departure $\{d_{j+1}, \dots, d_n\}$.
\end{itemize}
Figure~\ref{fig:arrangement} provides an example of an arrangement that satisfies \inc and \dep. Note that loads can be stored to or retrieved from any location of the I/O row. These definitions establish \emph{static feasibility}: they guarantee that valid paths exist if loads are moved one at a time, independent of conveyor dynamics or robot parallelization. Both of these aspects are the focus of the current work.

The following define parameters for departure uncertainty. 
To account for uncertainty in the departure order, the notion of $k$-robustness is useful:

\begin{definition}
Let $\tilde{S}$ be a permutation of sequence $S = (s_1, \dots, s_n)$. Then, $\tilde{S}$ is a $k$-bounded perturbation if for every pair $(s_i, s_j)$ where $i < j$ but $s_j$ appears before $s_i$ in $\tilde{S}$ (i.e., an order inversion), the index distance satisfies $j - i \le k$.
\end{definition}

\begin{definition}[$k$-Robustness]
    An arrangement is \textbf{$k$-robust} with respect to $S$ if it satisfies every valid $k$-bounded perturbation $\tilde{S}$ of $S$.
\end{definition}

The following sections include the main contribution of this work, which take storage arrangements designed for sequential motion as inputs.
The multi-robot planning algorithms are designed to succeed on any input arrangement \storearr that satisfies \inc and \dep.
The proposed algorithms leverage these invariants to guarantee completeness (Section~\ref{completeness}) while parallelizing operations for maximum throughput.

\section{Prioritized Planning Algorithm}

The proposed algorithm utilizes an asynchronous prioritized planning strategy where robots are assigned tasks and compute paths only upon becoming idle after the completion of prior tasks. Collision-free coordination is maintained via a global reservation table, which tracks space-time constraints as tuples $(p,t,d)$, which represent the position $p$, timestep $t$, and prohibited entry direction $d$, to explicitly prevent directional following conflicts.

To support the two-level height model, the reservation system tracks obstacles on two distinct layers:
\emph{Robot Layer:} Active robots reserve cells along their planned trajectories. Once a robot completes its path, it is treated as a static obstacle at its final cell for all future timesteps until assigned a new task. \emph{Load Layer:} Stored loads are marked as static obstacles. The status of load obstacles is updated given robot paths: a load-obstacle is removed from the table at the timestep a robot is planned to execute a \textit{pickup}, and a new load-obstacle appears at the timestep a robot is planned to perform a \textit{drop-off}.

\begin{figure}[t]
\centering
\includegraphics[width=0.6\textwidth]
{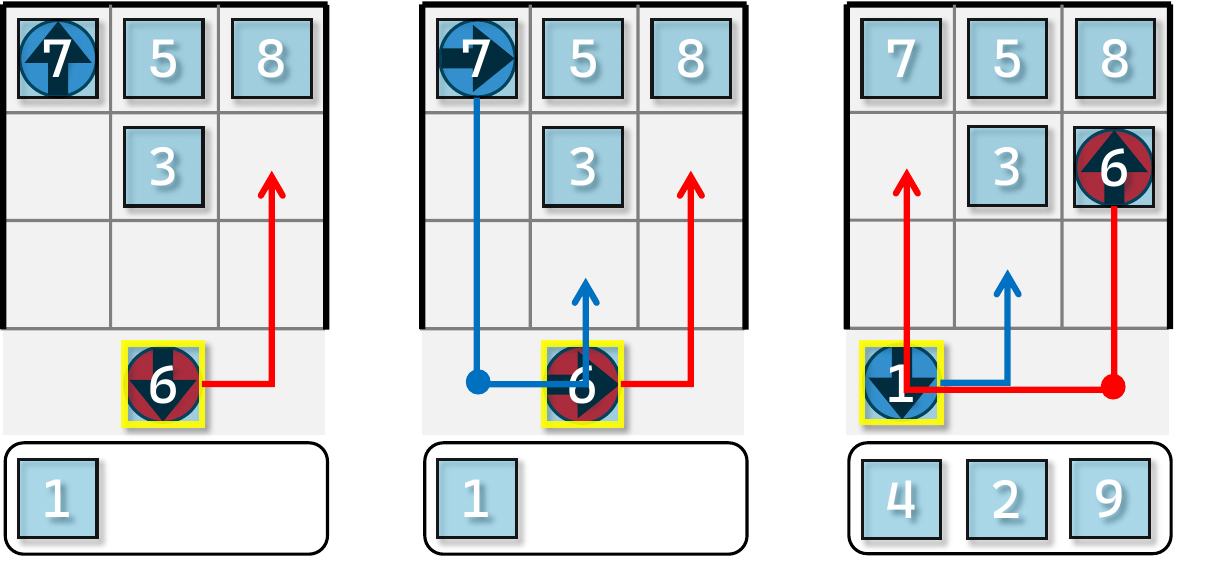}
\put(-185, -10){($a$)}
\put(-112, -10){($b$)}
\put(-40, -10){($c$)}
\caption{An example instance of the storage algorithm with $\inc=(3, 7, 5, 8, 6, 1, 9, 2, 4)$. Loads being carried are highlighted in yellow. (a) The red robot has a reserved path (in red), while the blue robot has just stored 7 and is now idle. (b) The solver plans a path for the blue robot, which includes the pickup point marked with a blue circle. (c) A few timesteps later, the red robot has finished its path, while the blue robot has a small segment remaining to drop load 1. The conveyor loads the next batch $(9, 2, 4)$ according to the arrival sequence. The red robot plans a path to pick up the next load $9$, while respecting the prioritized blue path.}
\label{fig:storage_example}
\end{figure}

\subsection{Storage}
Consider an arrival sequence $\inc=(a_1,a_2,\dots,a_n)$, and a storage arrangement \storearr that satisfies \inc. Assume planning is performed online, where robots claim tasks dynamically and do not possess knowledge of a load's destination until assignment.

\begin{algorithm}[h!]
\caption{Multi-Robot Prioritized Planning for Storage}
\SetAlgoVlined
\DontPrintSemicolon

\SetKwFunction{Plan}{planPath}
\SetKwFunction{Spot}{pickupSpot}
\SetKwFunction{Queue}{queuePath}

\KwIn{Robots $\mathcal{R}$, Arrival order $A$, Planned arrangement \storearr}
\KwOut{Storage execution plan $\Pi$}

\While{$\inc$ has unclaimed loads \textbf{or} active robots exist}{
    
    $R_{idle} \leftarrow \{ r \in \mathcal{R} \mid r\text{.queue is empty} \}$\;
    
    \While{$R_{idle}$ is not empty \textbf{and} $\inc$ has unclaimed loads}{
        $\ell \leftarrow$ next unclaimed load in $\inc$\;
        
        $r \leftarrow \arg\min_{r' \in R_{idle}} \text{dist}(r', \Spot(\ell))$\;

        $p_{start} \leftarrow \Spot(\ell)$\;
        \label{pickup}
        $p_{goal} \leftarrow \storearr[\ell]$\;
        
        $\pi \leftarrow$ \Plan{$r$, $p_{start}$, $p_{goal}$}\;
        \label{alg:a_star}
        
        \If{$\pi \text{ is success}$}{
            Mark $\ell$ as \textbf{claimed}\;
            $r.\Queue(\pi)$\;
        }
        $R_{idle} \leftarrow R_{idle} \setminus \{r\}$\;
    }
    Execute next step for all robots\;
}
\end{algorithm}

\emph{Task Assignment.} Whenever a robot becomes idle, it is considered for assignment to the next unclaimed load $a_k$ in the arrival sequence. If multiple robots are idle simultaneously, $a_k$ is greedily assigned to the robot closest to the pickup cell.

\emph{Pickup and Planning.} The pickup position for $a_k$ is located on the I/O row at the column corresponding to its batch index given by \inc (line~\ref{pickup} of algorithm 1). The planner executes a space-time A* search to generate a time-minimal trajectory from the robot's current position to the pickup cell and subsequent storage location $\storearr[a_k]$. This search checks against the global reservation table to ensure collision-free movement. If the A* search fails to find a valid path in line~\ref{alg:a_star}, the robot waits and replans in the next timestep without claiming a load, allowing another robot to potentially successfully plan a path for the load.

\emph{Conveyor Synchronization.} The planner implicitly coordinates with the conveyor. If a valid path allows the robot to arrive at the pickup cell at time $t$, the system reserves the I/O slot at time $t-1$ for the conveyor to push the load from the conveyor row. If the I/O slot is blocked at $t-1$ (e.g., by another robot), the A* search treats $t$ as invalid for pickup and searches for a later arrival time. See Fig.~\ref{fig:storage_example} for a storage sequence example.

\begin{figure}[t]
\centering
\includegraphics[width=0.6\textwidth]
{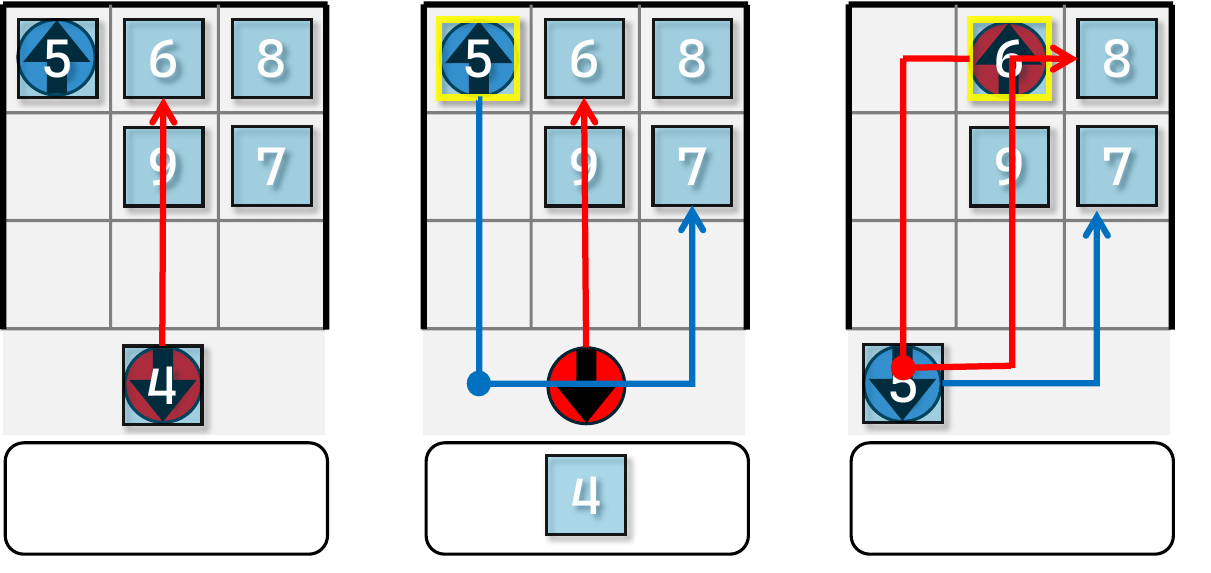}
\put(-185, -10){($a$)}
\put(-112, -10){($b$)}
\put(-40, -10){($c$)}
\caption{An example instance of the retrieval algorithm with $\dep=(1,2,3,4,5,6,7,8,9)$. Loads being carried are highlighted in yellow. (a) The red robot has an already reserved path in red, while the blue robot has just reached $5$ and is now idle. (b) The solver plans a path for the blue robot, which includes the drop-off point marked with the blue circle and the next unclaimed load in \dep, load $7$. The conveyor can also receive the next load in \dep, which is load $4$ from the red robot. (c) A few timesteps later, the red robot has finished its path, while the blue robot has a segment remaining. The red robot plans a path to drop off the current load, $6$, and then move to the next unclaimed load, $8$. The leftmost drop-off spot is reserved until the blue robot has finished its drop-off.}
\label{fig:retrieval_example}
\vspace{-10pt}
\end{figure}

\subsection{Retrieval}

The objective is to maintain the same approach as for storage but mirrored.
Each robot handles the retrieval of a load and then goes back to the storage area to wait (idle) under a stored load, to avoid interfering with subsequent retrievals.  
Applying this approach for retrieval introduces a new challenge, namely the choice of which load to have a robot wait under after it drops a load off at the I/O row. 
This choice is easy for storage, since a robot can stay under the load it just stored. 
For retrieval, robots should be positioned under loads according to the departure sequence, but there is no guarantee that a load that is yet to depart can be accessed (e.g., the 5th load to depart can be blocked from all sides by previously departing loads claimed by occupying robots). 
Furthermore, if knowledge of the departure sequence is limited, then the approach is not clearly defined. 
Therefore, the approach allows the possibility of sending a robot back under an arbitrary load.

\begin{algorithm}[t!]
\caption{Multi-Robot Prioritized Planning For Retrieval}
\SetAlgoVlined
\DontPrintSemicolon
\SetKwFunction{Plan}{PlanPath}
\SetKwFunction{Queue}{queuePath}

\KwIn{Robots $\mathcal{R}$, Departure order $\dep$, Storage arrangement $\storearr$}
\KwOut{Retrieval execution plan $\Pi$}

\While{$\dep$ has unclaimed loads \textbf{or} active robots exist}{
    \For{$r \in \mathcal{R}$ \textbf{where} $r.\text{queue}$ is empty }
    {
    \label{idle_robots_retrieval}
        $\ell \leftarrow \text{next unclaimed load in } \dep$\;
        $p \leftarrow \storearr[\ell]$\;
        \eIf{$r\text{ has claimed a load}$}
        {
            \If{$\ell-1$ has not queued a path to I/O row}
            {
                \textbf{continue}\;
            }
            \label{dep_sequencing}
            $\pi \leftarrow$ \Plan{$r$, I/O row, $p$}\;
            \label{alg:a_star_retrieval}
            \eIf{$\pi \text{ is success}$}{
            Claim $\ell$\;
            $r.\Queue(\pi)$\;
        }{
            \ForEach{next closest unclaimed load $\ell' \in \storearr$}{
            \label{under_unclaimed_load}
                $p' \leftarrow \storearr[\ell']$\;
                $\pi \leftarrow$ \Plan{$r$, I/O row, $p'$}\;
                \If{$\pi \text{ is success}$}{
                    $r.\Queue(\pi)$\;
                    \textbf{continue}\;
                }   
            }
            \label{back_row}
            $\pi \leftarrow$ \Plan{$r$, I/O row, back row}\;
        }
        }{
            
            $\pi \leftarrow$ \Plan{$r$, $p$}\;
            \label{not_carrying}
            \If{$\pi \text{ is success}$}{
            Claim $\ell$\;
            $r.\Queue(\pi)$\;
        }
        }

    }
    \ Execute next step for all robots\;
}
\end{algorithm}

Algorithm 2 details the retrieval algorithm. The input is a departure sequence, $\dep=(1,2,\dots,n)$ and an initial storage arrangement $\storearr$, which satisfies \dep. Robots avoid task conflict through a prioritized claiming system. A robot irrevocably claims the next load in the departure sequence only upon the successful calculation of a valid path. This ensures that it can transition to the next task immediately after its current delivery and wait safely under its next claimed load. 
At the start, no robots have claimed loads, so each robot finds a path to the next unclaimed departing load in \dep, namely load $j$ (line~\ref{not_carrying}). 
If a robot has claimed a load and finished its path to the load's position, $\storearr[j]$, it plans a minimum-time path to any drop-off location on the I/O row and then to its next assigned load, which is the next unassigned departing load in \dep (line~\ref{alg:a_star_retrieval}).
Each path will be reserved in the form of obstacles for future path planning, as in the storage algorithm. See Fig.~\ref{fig:retrieval_example} for an example sequence.

If planning fails in line~\ref{alg:a_star_retrieval}, the robot executes one of two fallback options. First, the robot attempts to plan a path to drop the load on the I/O row then move under the closest accessible unclaimed load (line~\ref{under_unclaimed_load}). This prevents subsequent retrieval paths from being blocked (see \Cref{completeness}). 
Alternatively, if there is no available load to wait under, the robot plans a path to drop the load on the I/O row and then moves to an empty cell in the back row to avoid obstructing the paths of other robots (line~\ref{back_row}). 
A non-obstructing cell on the furthest back row is guaranteed to exist as shown in the extended proof in~\cite[Appendix]{complete}. 
If planning still fails, the robot waits and replans in the next timestep. 
If a robot executes a fallback as described above due to a planning failure at line~\ref{alg:a_star_retrieval}, the robot completes its delivery without claiming a subsequent load. Consequently, upon finishing its path, the robot has no claimed load and re-enters the planning phase at line~\ref{not_carrying} to claim and move toward the next available load. 

\textit{Departure sequencing.}
Since the storage \storearr is guaranteed to satisfy \dep but not necessarily an arbitrary sequence, the approach enforces the departure order by ensuring a robot  plans a path only for load $j$ toward the I/O row when the previous load $j-1$ in the departure sequence has a queued path to the I/O row (line~\ref{dep_sequencing} of algorithm 2). Then, each retrieval path reserves its drop-off cell on the I/O row until the conveyor receives the load. As the conveyor accepts loads strictly in departure order, this reservation prevents out-of-sequence loads from prematurely occupying the I/O row and blocking access for the current target load. Section~\ref{completeness} uses this strict departure sequence to provide completeness arguments. To enforce this order for idle robots, line~\ref{idle_robots_retrieval} processes robots in a sorted manner. Robots with claimed loads take precedence, ordered by the departure index of their load. Robots without a load are then ordered by their distance to the next unclaimed load, following the same logic used for storage.

\vspace{-3pt}
\section{Properties}
\vspace{-3pt}
\label{completeness}

This section establishes the completeness of the multi-robot planning algorithms.

\begin{theorem}
The storage algorithm
is complete.
\end{theorem}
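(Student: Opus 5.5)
We argue by induction over the arrival sequence. The claim to maintain is: whenever load $a_k$ is the next unclaimed load, some idle robot eventually obtains a successful plan in line~\ref{alg:a_star} for it within a bounded number of timesteps. The key structural fact is that once every robot with a queued path has finished, the reservation table contains no moving obstacles. At that point the only obstacles are the following.
\begin{itemize}
\item On the \emph{load layer}: the stored loads $\{a_1,\dots,a_{k-1}\}$ at their cells $\storearr[a_i]$.
\item On the \emph{robot layer}: each robot parked at its final cell.
\end{itemize}
Under the storage algorithm, each robot's final cell is exactly $\storearr[a_i]$ for the last load it stored. The exception is a robot that has never been assigned a task, which stays at its initial position; we assume, or arrange by the initial configuration, that such a robot sits in a location that does not block others, e.g.\ on the I/O row.

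\textbf{Step 1 (progress/termination of queued paths).} Every queued path is finite, and robots never replan mid-path. Hence after at most $T_{\max}$ timesteps, the length of the longest queued path, all active robots become idle. If no new claim succeeds in the meantime, we reach a quiescent state $S$. In $S$, the conveyor also has a defined state: either the batch containing $a_k$ is on the conveyor row, or it is placed there once the row empties. The row empties because earlier loads of the batch were picked up, since their pickups were already reserved.

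\textbf{Step 2 (a path exists in the quiescent state).} Take the idle robot $r$ chosen for $a_k$. We construct a path in $S$ in three legs.
\begin{itemize}
\item \emph{Leg 1: robot to pickup cell.} An unloaded robot passes beneath loads, so the robot layer is the only concern. Every other robot is parked inside $W$ under a stored load, or on the I/O row. We route $r$ from its cell to \texttt{pickupSpot}$(a_k)$ by moving within $W$ and then down the relevant column. Parked robots are obstacles, so we instead route through the I/O row. The I/O row is kept free in $S$ because all pickups happened and robots left, so we need $r$ to first exit $W$. This is the delicate point; see below.
\item \emph{Leg 2: conveyor push.} The conveyor pushes $a_k$ to the I/O slot at $t-1$. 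This slot is free because no other reservations remain.
\item \emph{Leg 3: carry $a_k$ to $\storearr[a_k]$.} Satisfiability of $\inc$ by $\storearr$ gives a path from the I/O row to $\storearr[a_k]$ avoiding the loads $\{a_1,\dots,a_{k-1}\}$. A loaded robot also collides with parked robots, but every parked robot lies under one of those loads. So the static-feasibility path avoids them too.
\end{itemize}
Space-time A* is complete on a finite horizon with static obstacles, so it finds this path, possibly with waits and rotations. Rotations only add time and never block, since a robot rotating in place occupies only its own cell.

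\textbf{Main obstacle.} The hard part is Leg 1: an idle robot parked under $\storearr[a_i]$ must reach the I/O row while other robots are parked under other loads. This needs a robot-layer path through $W$ that avoids cells occupied by parked robots. I would handle this by reusing the retrieval-style argument in reverse. The cell $\storearr[a_{k-1}]$, where the most recently idle robot sits, was reached by a load-layer path that avoided all earlier loads, and hence all earlier parked robots. Reversing that path, as an unloaded robot, gives an exit avoiding every parked robot except those under later-placed loads, and there are none. So the robot that most recently finished has an exit.

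I would therefore argue that the $\arg\min$ choice need not matter. When line~\ref{alg:a_star} fails for one robot, the loop keeps $a_k$ unclaimed, and on subsequent timesteps every idle robot is tried. In particular the last-finishing robot is tried, and for it the reversed storage path followed by Legs 2--3 succeeds. Together with finiteness ($n$ loads, bounded delay per load), this yields completeness.
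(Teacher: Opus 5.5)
Your skeleton matches the paper's: wait until every robot is idle, then use the static storage path $\pi$ for $a_k$, which avoids $\storearr[a_1],\dots,\storearr[a_{k-1}]$ and hence every robot parked under a stored load. The gap is the case you assume away, which is exactly the one the paper has to argue. In Algorithm~1 a robot that has never claimed a load never moves, so it is a permanent obstacle, and no placement is automatically ``non-blocking.'' Inside $W$ it may sit on $\pi$, which falsifies your Leg~3 claim that every parked robot lies under a stored load. On the I/O row, as you suggest, it can block the conveyor push into the pickup slot. It can also block the I/O-row travel that Legs~1--2 rely on when you assert the I/O row is free in $S$. Your induction also has no base case, and at $k=1$ every robot is in exactly this situation.

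The paper's fix is to place the robots initially on the front row of $W$. The I/O row is then robot-free. Rerouting the prefix of $\pi$ along the empty I/O row makes $\pi$ meet the front row in a single cell, so at most one unassigned robot blocks it. In that case that robot itself steps down, picks up $a_k$ and follows $\pi$. Otherwise any robot that reaches the I/O row can do the job, and an unassigned one needs a single step.

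Your Leg~1 argument also fails as written, for two reasons. First, in parallel execution the robot that finished last need not be the one that stored $a_{k-1}$, since drop-offs complete out of arrival order. Second, a robot's executed path is a space-time path. A load obstacle appears only at its drop-off timestep, so cells the robot crossed may later have received loads, with their robots parked under them. Reversing that path therefore need not give an exit.

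What does hold is the static version. The robot parked under $\storearr[a_{k-1}]$ cannot have been reassigned, since $a_k$ is unclaimed. It can reverse the static storage path of $a_{k-1}$, which avoids $\storearr[a_1],\dots,\storearr[a_{k-2}]$ and therefore every robot parked under those loads; any robot still on the front row can step down directly. Alternatively, once the I/O row is robot-free, the first robot met along any grid path from the I/O row has a robot-free route to it.
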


\begin{proof}
Let $\inc=(a_1, a_2, \dots, a_n)$ be the arrival sequence and $\storearr$ be a storage arrangement that satisfies \inc. Assume that the robots are all initially positioned on the front row of $W$ (the row above the I/O row). This allows each robot to initially have an unobstructed path to the I/O row. The proof is by induction on $k$ that load $a_k$ can be stored.

For the base case, consider the first load $a_1$. 
There exists a path $\pi$ from the I/O row to the target $\storearr[a_1]$, which passes through at most one cell with a robot $r$. If such a robot $r$ exists, then path planning for $r$ succeeds: $r$ can move via $\pi$ to pick up the load and then move it to $\storearr[a_1]$. 
If $\pi$ does not pass through an occupied cell, path planning succeeds for any robot.

Assume loads $a_1, \dots, a_{k-1}$ are already stored.  Consider the arrival of load $a_k$. Suppose there is a time step at which all the robots are idle. If planning succeeds for $a_k$ before that, then the point is shown. Otherwise, the following arguments show that it will also be true.
By the definition of $\storearr$, there exists a path $\pi$ from the I/O row to $\storearr[a_k]$ that avoids all previously stored loads.
Robots are positioned either under stored loads or at their initial cells (for robots that have not yet picked up a load). Therefore, the only obstacles on $\pi$ may be robots still at their initial cells.
If such a robot-obstacle $r$ exists, assume that it is the only one by the choice of the initial robot configuration. Therefore, path planning will succeed for $r$ as in the base case.
If there are no robot-obstacles on $\pi$, then there exists a robot that can reach the I/O row to pick up and store $a_k$ via $\pi$.
\end{proof}

For retrieval, we follow a similar approach to prove completeness.

\begin{restatable}{theorem}{retrieval} \label{thm:retrieval}
The retrieval algorithm is complete.
\end{restatable}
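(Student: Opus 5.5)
We argue by induction on the departure index $j$, mirroring the proof of the storage theorem. The claim is that load $j$ is eventually carried to the I/O row and received by the conveyor. We take $\storearr_{\text{init}}$ to satisfy $\dep=(1,\dots,n)$, with robots initially positioned either under loads or on cells that do not obstruct any retrieval path. For example, they could start as they ended the storage phase, under stored loads, which is exactly the configuration the storage algorithm produces.

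The invariant we maintain is the following. At any time step at which no robot has an unfinished queued path, every robot is idle either (i) under a stored load that has not yet departed, whether claimed or unclaimed, or (ii) on an empty back-row cell that is non-obstructing. Case (ii) uses the guarantee from~\cite[Appendix]{complete}.

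Since robots without a load pass beneath stationary loads, a robot in state (i) is never an obstacle for another unloaded robot. It is an obstacle for a loaded robot only in its own cell, which also holds a load. So for a loaded robot, the obstacles on any path are exactly the stored loads, plus possibly back-row robots. The back-row robots do not lie on any retrieval path by the choice of the fallback cell. The main step then uses static feasibility. When all robots are idle and load $j$ is next, loads $1,\dots,j-1$ have already departed, or have queued paths to the I/O row that will complete. Their cells are therefore free of loads.

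By the definition of satisfying $\dep$, there is a path $\pi$ from $\storearr[j]$ to the I/O row avoiding $\{j+1,\dots,n\}$, so $\pi$ is free in the load layer. If some robot $r$ has claimed $j$ and sits under it, the check at line~\ref{dep_sequencing} passes because $j-1$ has queued its path. Line~\ref{alg:a_star_retrieval} or one of its fallbacks then succeeds, since the drop-off itself only needs $\pi$ plus a free I/O cell. If no robot has claimed $j$, then line~\ref{not_carrying} succeeds for any idle robot, because unloaded robots can travel under loads and the only blocking robots are stationary and idle.

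We also need the fallback to restore the invariant. After the drop-off, the robot either goes under an unclaimed load or goes to the back row. If both options fail at some timestep, the robot waits; once the other robots' reservations expire, space-time A* succeeds on the static configuration.

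Two further points must be checked. First, the I/O drop-off cell becomes free: the conveyor accepts loads in order and each reservation is released when its load is received, so the I/O cell needed for $j$ is free once $j-1$ has been received. Second, progress: since reservations are finite and the order is enforced, a time with all robots idle is eventually reached unless $j$ is planned earlier, so there is no livelock.

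The main obstacle is the case where the only robot that could block $\pi$ or the I/O cell is itself waiting to plan. Take, for example, a loaded robot holding a later load $j' > j$. This cannot happen, because claims are made in departure order and line~\ref{dep_sequencing} forbids moving $j'$ toward the I/O row before $j'-1$. To show this we use the sorted processing at line~\ref{idle_robots_retrieval}, which lets $j$'s robot plan first. Its A* sees only reservations of robots carrying earlier loads, and those reservations terminate in finite time.
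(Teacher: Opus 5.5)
Your outline matches the paper's: induction on the departure index, waiting for a time when all robots are idle, using the load-free path $\pi$ from static feasibility for the claim and the drop-off, and splitting the return trip into ``park under a load'' versus ``go to the back row.'' The gap is that the one nontrivial step is assumed rather than proved. Case (ii) of your invariant, the assertion that ``one of its fallbacks then succeeds,'' and the claim that A* eventually succeeds on the static configuration all rest on a reachable, non-obstructing back-row cell existing. You import this as ``the guarantee from the Appendix,'' but it is not an independent lemma you may cite. It is exactly the remaining case of this theorem, and the paper's full proof consists of establishing it. A symptom is that your argument never uses $m \le C$. That hypothesis is essential: on a $2\times 1$ grid with one robot idling under each load, the robot under the first load can never plan its return trip, and the algorithm deadlocks.

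The missing idea is a column count:
\begin{itemize}
\item When $r$ must return and no unclaimed load is accessible, at most $C-1$ other robots are idle in $W$. Let $L$ be the loads above them, and fix for each $\ell\in L$ a shortest eventual retrieval path $\pi_\ell$.
\item These paths exit $W$ through at most $C-1$ columns, so some column $K$ is not an exit column.
\item If $K$ contained a load, its bottommost load $\ell'$ would have to lie in $L$, since otherwise $r$ could drive up $K$ and park under it. With nothing below it, $\pi_{\ell'}$ would exit through $K$, a contradiction.
\item Hence $K$ is load-free and no shortest $\pi_\ell$ enters it. Every load outside $L$ is currently blocked by a load of $L$, since otherwise $r$ could reach it. So the back cell of $K$ is reachable and blocks no retrieval.
\item For later retrievals, $K$ is counted as one of the at most $C-1$ exit columns while $r$ idles there, so the same count applies again. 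Your invariant needs this bookkeeping and currently takes it for granted.
\end{itemize}

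Separately, your claim that a robot idling under a load ``is never an obstacle for another unloaded robot'' is false in this model. Unloaded robots may pass under loads, but two robots may never occupy the same cell. At an all-idle time, the other robots are therefore exactly the obstacles an unloaded robot faces. This is why an accessible load to park under can fail to exist, and why the back-row case is needed at all. It also means the initial claim of load $j$ succeeds only for \emph{some} idle robot, not for any idle robot; the argument still goes through because the algorithm tries every idle robot. Your ``main obstacle'' paragraph about robots holding later loads is correct, but it addresses an easy point. The real difficulty is the return segment, and that is what the column-counting argument handles.
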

\begin{proof}
The proof is by induction on $k$ that load $k$ can be retrieved.
For the base case, consider load 1, the first to depart.
First, a robot must claim load 1.
Indeed, as idle robots are only obstructed by other robots, some robot $r$ can claim load $1$ and proceed to $\storearr[1]$.
Next, we need to show that $r$ can drop load 1 off by successfully planning two path segments:
(i) from $\storearr[1]$ to the I/O row and (ii) then back to $W$.
By the definition of the arrangement \storearr,
there exists a path from $\storearr[1]$ to the I/O row (more specifically, $\storearr[1]$ must be adjacent to the I/O row).
Path planning for $r$ must also succeed for the second path segment: there is a load under which $r$ can wait, which can be either the next unclaimed load or an arbitrary one.
Other robots cannot interfere with the two path segments, as no drop-off paths are planned for them until a drop-off for load 1 is queued. %

Assume loads $1, \ldots, k-1$ have been retrieved and consider load $k$.
As in the storage case, let us assume that all the robots become idle in \storage before $k$ is retrieved, as otherwise there is nothing to show.
There exists a load-free path $\pi$ from $\storearr[k]$ to the I/O row as $\storearr$ satisfies $D$.
First, some robot $r$ can claim load $k$ and proceed to $\storearr[k]$, which is accessible to an idle robot due to the existence of $\pi$. %
We now assume that every idle robot is under a load (idle robots not under a load will be handled below), which means that $r$ can drop load $k$ off via $\pi$.

From here, two cases remain for the return path to \storage: in the first, $r$ returns to wait under some load.
Otherwise, there is no accessible load for $r$ to go under.
Assume the second case, as otherwise we are done.%

This case is presented in the full proof in~\cite[Appendix]{complete}.
\end{proof}

\vspace{-3pt}
\section{Experiments}
\vspace{-3pt}

This section evaluates the performance of the proposed prioritized planner integrated with the arrangement choices of the Storage and Retrieval with Minimum Relocations algorithms \cite{stormr25,aaai26}. 

The experiments are designed to validate three primary claims. \textit{Efficiency:} The algorithm effectively utilizes multiple robots to reduce the total makespan of storage and retrieval tasks, achieving near-linear speedup in the number of robots. \textit{Scalability:} The planner maintains low runtime per task, making it suitable for online, real-time operation. \textit{Solution quality:} The solutions exhibit low suboptimality when compared to a theoretically optimal (but computationally expensive) coupled planner. 

The experiments are executed on a variety of relocation-free storage arrangements and for $1$ to $C$ robots. To generate relocation-free storage arrangements, the experiments use the \stormr solver $(k=0)$ \cite{stormr25} and the \rstormr solver \cite{aaai26} for $k=0.4 \cdot C$ for a $k$-robust arrangement. 

Each instance randomizes an arrival sequence \inc and a departure sequence \dep. For experiments testing robust arrangements given $k$, we generate a $k$-perturbed sequence $\tilde{D}$. Experiments were run on Ubuntu 22.04.5 LTS with an Intel Xeon Gold 5220 CPU (2.20 GHz). A comprehensive listing of experimental results for additional configurations (grid sizes, $k$ values) is provided in~\cite[Appendix]{complete}.

\begin{figure}[t!]
    \centering
    
    \begin{subfigure}[b]{0.33\textwidth}
        \centering
        \includegraphics[width=\textwidth]{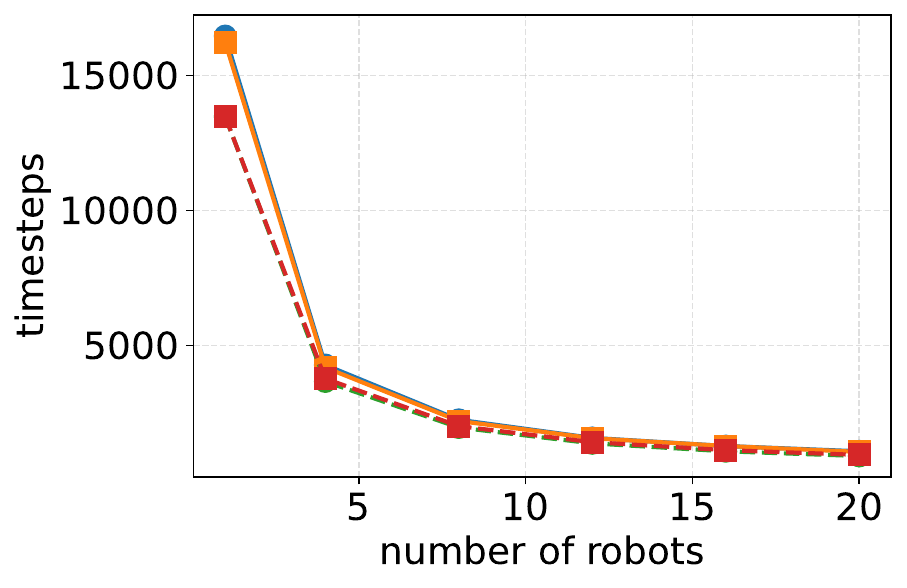}
        \caption{Makespan ($20 \times 20$)}
    \end{subfigure}
    \hfill
    \begin{subfigure}[b]{0.305\textwidth}
        \centering
        \includegraphics[width=\textwidth]{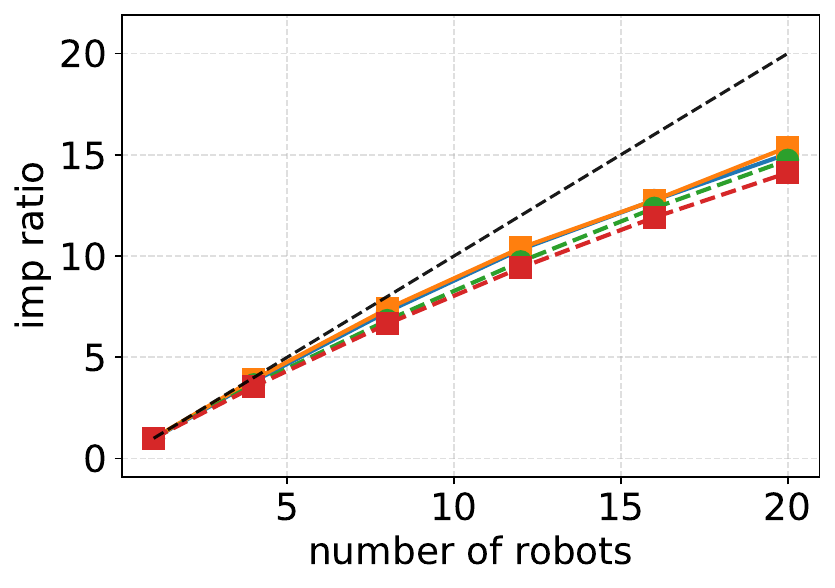}
        \caption{Imp. Ratio ($20 \times 20$)}
    \end{subfigure}
    \hfill
    \begin{subfigure}[b]{0.33\textwidth}
        \centering
        \includegraphics[width=\textwidth]{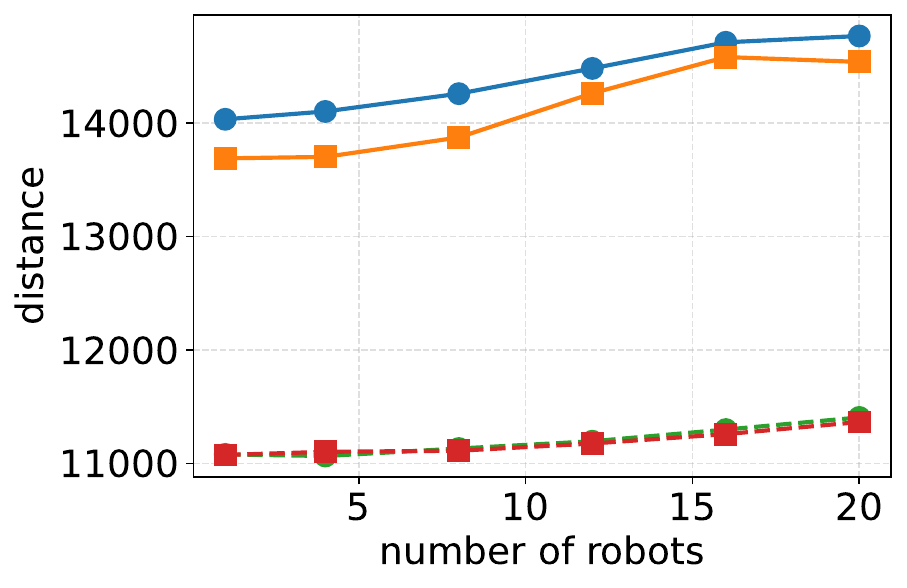}
        \caption{Distance ($20 \times 20$)}
    \end{subfigure}

    \vspace{0.5em} %

    \begin{subfigure}[b]{0.33\textwidth}
        \centering
        \includegraphics[width=\textwidth]{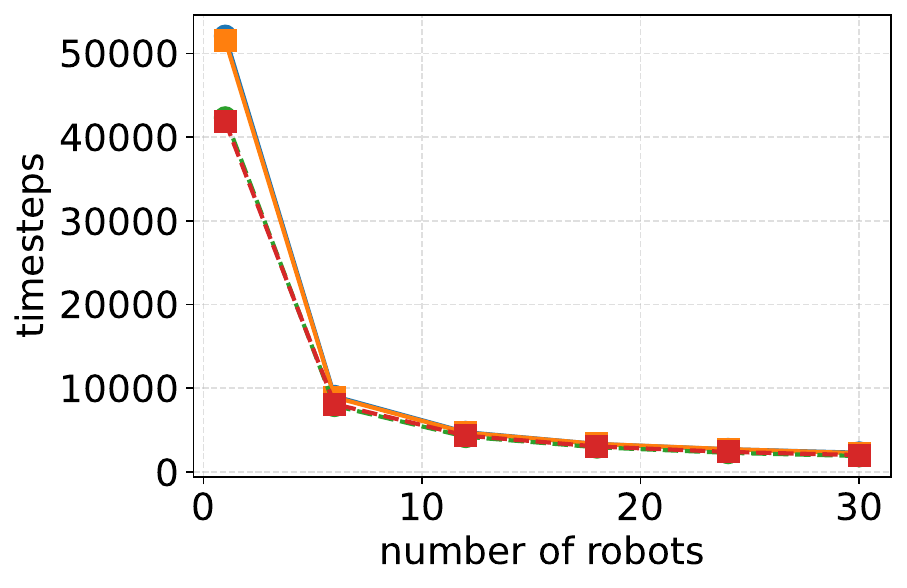}
        \caption{Makespan ($30 \times 30$)}
    \end{subfigure}
    \hfill
    \begin{subfigure}[b]{0.305\textwidth}
        \centering
        \includegraphics[width=\textwidth]{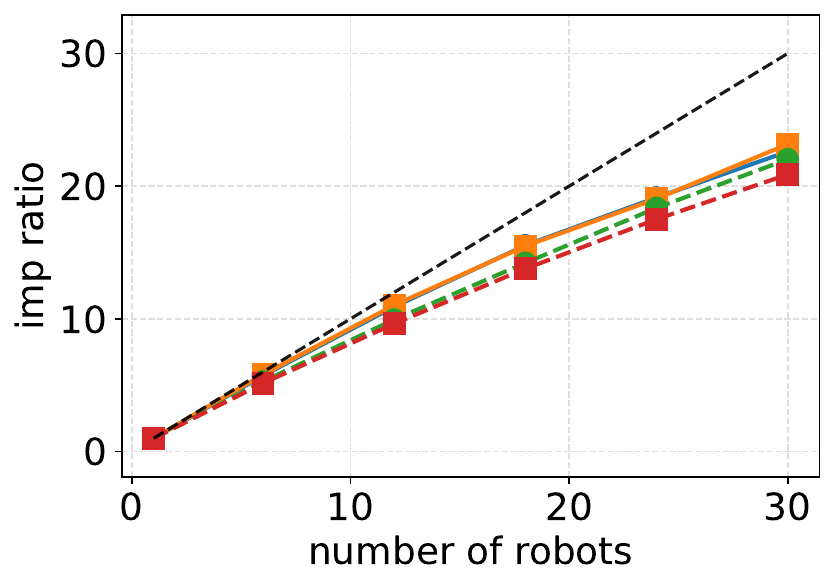}
        \caption{Imp. Ratio ($30 \times 30$)}
    \end{subfigure}
    \hfill
    \begin{subfigure}[b]{0.33\textwidth}
        \centering
        \includegraphics[width=\textwidth]{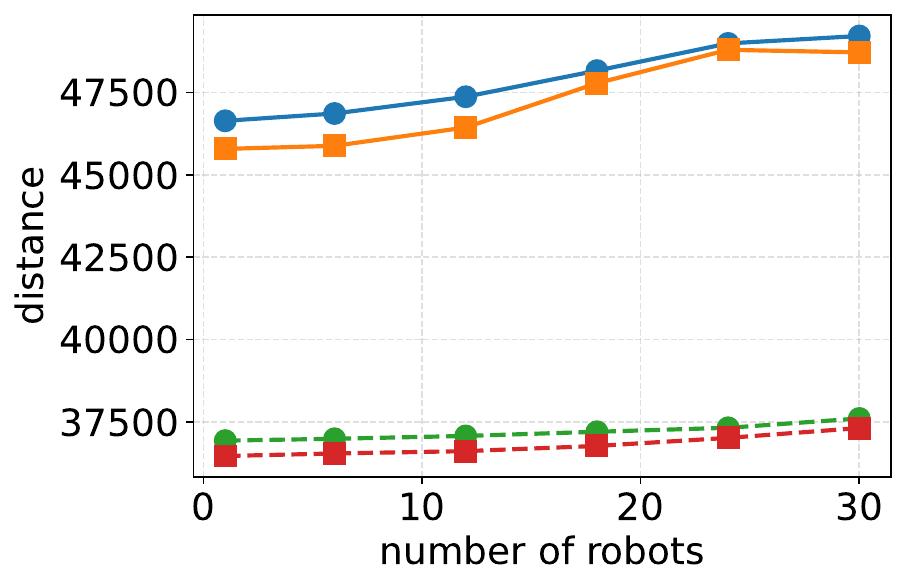}
        \caption{Distance ($30 \times 30$)}
    \end{subfigure}

     \includegraphics[width=0.8\textwidth]{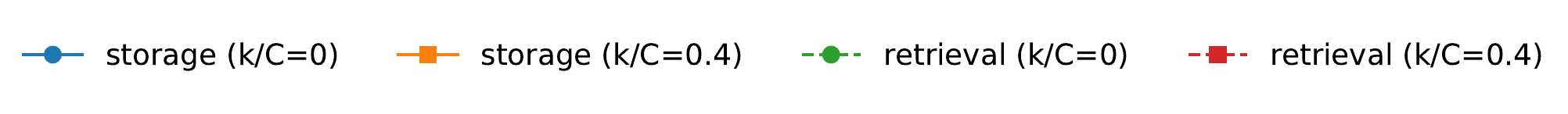}
    \vspace{-.2in}
    \caption{Performance comparison. Top row: $20\times 20$ grids; Bottom row: $30\times 30$ grids. Columns (left to right): Total Makespan, Makespan Improvement Ratio, and Total Distance traveled. Each data point averages 50 experimental trials.}
    \label{fig:experiment_plots}
\end{figure}

To quantify the performance gain provided by multiple robots relative to the system scale, consider the \textit{makespan improvement ratio} ($S_m$): the makespan improvement ratio is defined as $S_m = \frac{T_1}{T_m}$, where for a given grid size $\cols \times \cols$ and load count $n=\cols \cdot \cols$, let $T_1$ be the optimal makespan achieved by a single robot and $T_m$ be the makespan achieved by $m$ robots.

Figure~\ref{fig:experiment_plots} demonstrates that the prioritized planning approach achieves near-linear scalability relative to the number of robots ($m$). Figures~\ref{fig:experiment_plots}(a)(b) provide the absolute makespan and the improvement ratio. They demonstrate the significant decrease in makespan with the introduction of additional robots, which directly relates to the throughput efficiency of the storage system.
$S_m$ adheres closely to the ideal linear benchmark up to approximately $S_m=15$ for a $20 \times 20$ grid with $20$ robots.
This suggests that the system can support a moderate robot density before interference significantly impacts efficiency.

\textbf{Robustness.}
A key contribution of this work is the integration of \rstormr arrangements. These arrangements allow completeness under uncertain departure sequences that are revealed sequentially during retrieval. Notably, Fig.~\ref{fig:experiment_plots} reveals that the execution penalty for using robust arrangements ($k=0.4C$) compared to standard arrangements ($k=0$) is negligible (plots for the robust and the non-robust storage overlap).
This indicates that the robustness introduced by \rstormr to handle retrieval uncertainty does not hinder robot motion. 

\textbf{Coordination overhead.}
Figure~\ref{fig:experiment_plots}(c)(f) shows the average total distance traveled by robots. While the total distance naturally increases with $m$ due to conflict avoidance maneuvers, the slope is remarkably shallow (less than $5\%$ increase for $m=20$ compared to the single robot case).
This indicates that there is only a small sacrifice due to multi-robot coordination and collision avoidance. 

\begin{figure}[t]
    \centering
    \begin{subfigure}[b]{0.4\textwidth}
        \centering
        \includegraphics[width=\textwidth]{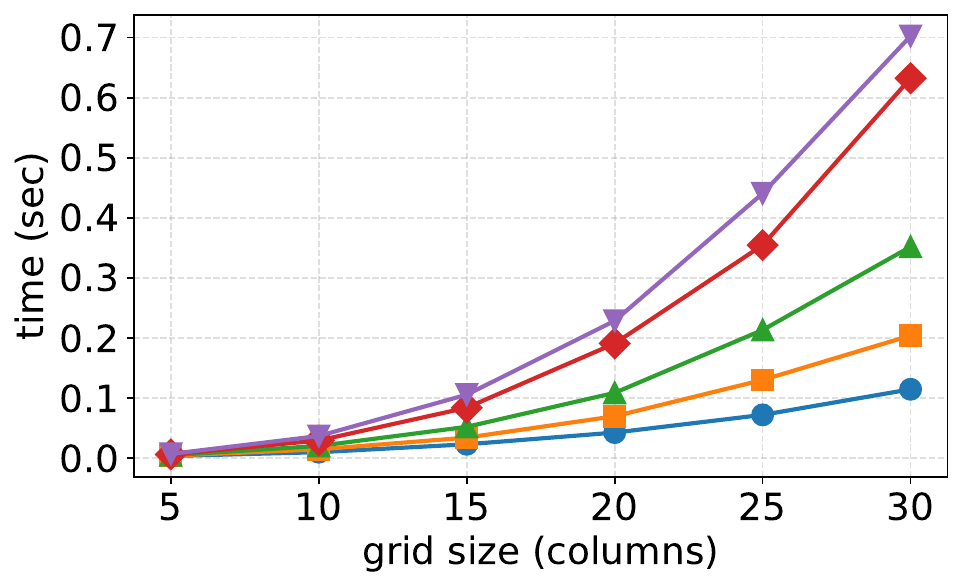}
        \caption{Storage runtime}
    \end{subfigure}
    \quad %
    \begin{subfigure}[b]{0.4\textwidth}
        \centering
        \includegraphics[width=\textwidth]{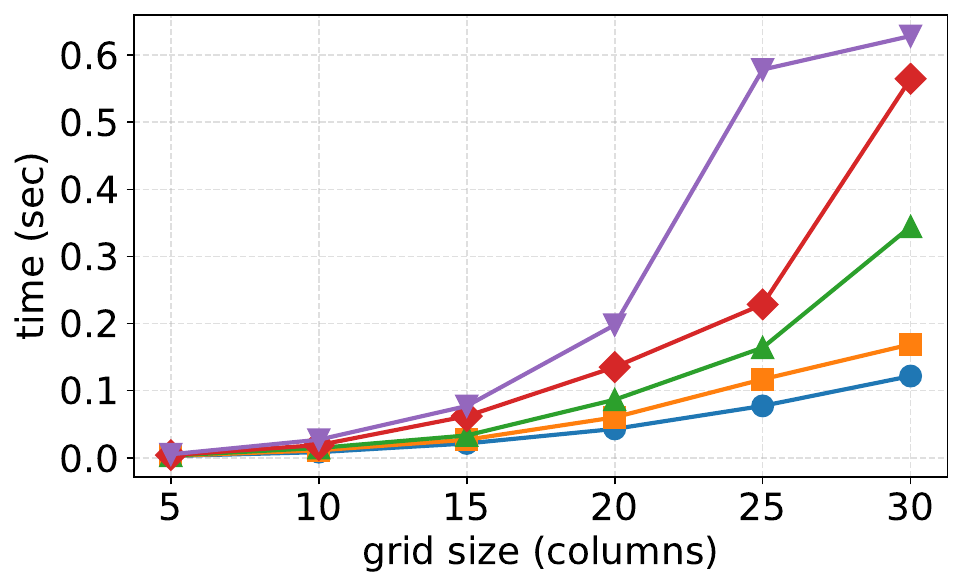}
        \caption{Retrieval runtime}
    \end{subfigure}
    \begin{subfigure}[b]{0.99\textwidth}
        \centering
        \includegraphics[width=\textwidth]{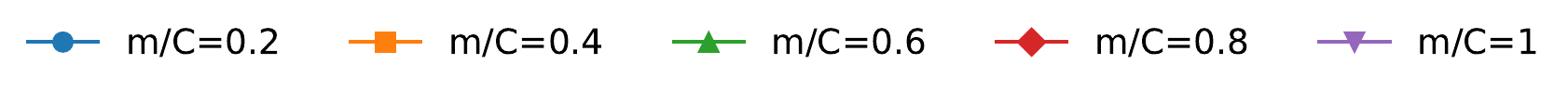}
    \end{subfigure}
    \vspace{-0.1in}
    \caption{Mean runtime per load
    for storage (left) and retrieval (right) as the square grid size increases (the $x$-axis corresponds to the number of columns \cols).
    Each plotted line corresponds to a different number of robots $m$ as a proportion of \cols.}
    \label{fig:runtime_plots}
\end{figure}
\textbf{Runtime.}
Figure~\ref{fig:runtime_plots} shows the runtime per load, which is the average planning time it takes to successfully calculate a path to store or retrieve a load as the grid size increases and for different numbers of robots.
There is a reasonable increase in planning runtime as the size of the grid increases as well as the number of robots increases. Nevertheless, the runtime always remains in the sub-second territory for the experiments performed, which can easily be accommodated by a real-world system in this space and allow the execution of these plans without any robot waiting for path planning.

\begin{table}[t]
\centering
\caption{Makespan suboptimality ratio for 2 robots. (20 trials per entry)}
\vspace{-0.1in}
\label{tab:storage-retrieval}
\begin{minipage}{0.48\linewidth}
    \centering
    \subcaption{Storage}
    \begin{tabularx}{\textwidth}{lXXXX}
        \toprule
        Grid\;\;\;\; & 3$\times$3 & 4$\times$4 & 5$\times$5 & 6$\times$6 \\
        \midrule
        Ratio & 1.091 & 1.162 & 1.210 & 1.176 \\
        \bottomrule
    \end{tabularx}
\end{minipage}
\hfill
\begin{minipage}{0.48\linewidth}
    \centering
    \subcaption{Retrieval}
    \begin{tabularx}{\textwidth}{lXXXX}
        \toprule
        Grid\;\;\;\;  & 3$\times$3 & 4$\times$4 & 5$\times$5 & 6$\times$6 \\
        \midrule
        Ratio & 1.086 & 1.108 & 1.129 & 1.111 \\
        \bottomrule
    \end{tabularx}
\end{minipage}
\end{table}

\textbf{Makespan optimality.}
To evaluate the solution quality of the prioritized planning algorithm, this section compares against an optimal coupled, centralized planner based on A* search in the joint configuration space of two robots, which is executed for both robots every time a robot is assigned a new storage or retrieval task.
To ensure computationally feasible runtimes for the optimal solver, we decompose the problem into batches of size $C$. That is, we run the storage (or retrieval respectively) with the goal of storing (or retrieving) a segment of $C$ loads in \inc or \dep. Then, the robots are placed back to their spots at the end of the batch and the solver is rerun for the next batch. Each trial sums up the makespan of each batch in \inc or \dep.
Given the computational explosion of the A* search, the method was only executed for these small multi-robot instances due to memory constraints. See~\cite[Appendix]{complete} for details about this solver.

Table~\ref{tab:storage-retrieval} shows that for small instances where the optimal coupled solver is tractable, the proposed algorithm exhibits a suboptimality ratio ranging from $1.09$ to $1.21$. A portion of this suboptimality comes from the optimal solver taking advantage of the conveyor model, which allows loads to be stored and retrieved slightly out of order as long as no blockages occur. The prioritized approach uses the fixed arrival and departure sequences for completeness guarantees. 

\section{Conclusion}

This paper presents a comprehensive algorithmic framework for multi-robot ordered storage and retrieval in maximum-capacity environments. By bridging the gap between the zero-relocation guarantees of \stormr and \rstormr arrangements and online prioritized planning, it demonstrates that high-density warehouses can achieve a scalable high-throughput two-step, storage and retrieval, process even at maximum capacity.

The theoretical analysis confirms that the specific invariants of these storage arrangements provide sufficient clearance to guarantee completeness for sequential prioritized planning, avoiding the deadlocks common in prioritized MAPF. Experimentally, we show that this approach achieves near-linear makespan improvement up to $m =C$. A critical finding is the negligible execution cost of robustness: the \rstormr layouts, which protect against departure sequence uncertainty, can be executed just as efficiently as the \stormr layouts. While the approach incurs a suboptimality gap compared to coupled search, it maintains scalability for large instances ($30 \times 30$ grids with 30 robots).

Future work could focus on narrowing the optimality gap by investigating other MAPF techniques, such as PIBT and LaCAM \cite{PiBT,lacam}.
Additionally, investigations into \stormr and \rstormr could reveal better arrangements that tailor to multiple robots.
If departure uncertainty becomes too high, target loads may become blocked.
This can motivate the development of a multi-robot blocked retrieval problem. 
Beyond algorithmic variations, this work can be extended to different environments, such as those that allow multiple-side access as well as static obstacles within the storage space.

\bibliographystyle{splncs04}
\bibliography{refs}
\newpage           
\appendix          

\section{Experimental Results}
\begin{longtable}{ccc|cccc|cccc}
\caption{Performance metrics for storage and retrieval phases varying grid size, $k$, and robot counts. The table reports makespan $T_m$, makespan improvement ratio $T_1/T_m$, runtime per load $t/n$, and distance all averaged of 50 trials per entry.}
\label{tab:multi_robot_results} \\

\toprule
\multirow{2}{*}{Grid} & \multirow{2}{*}{$k$} & Robots & \multicolumn{4}{c|}{Storage} & \multicolumn{4}{c}{Retrieval} \\
\cmidrule(lr){4-7} \cmidrule(lr){8-11}
& & ($m$) & $T_m$ & $\frac{T_1}{T_m}$ & $\frac{t}{n}$ & Dist. & $T_m$ & $\frac{T_1}{T_m}$ & $\frac{t}{n}$ & Dist. \\
\midrule
\endfirsthead

\multicolumn{11}{c}{{\bfseries \tablename\ \thetable{} -- continued from previous page}} \\
\toprule
\multirow{2}{*}{Grid} & \multirow{2}{*}{$k$} & Robots & \multicolumn{4}{c|}{Storage} & \multicolumn{4}{c}{Retrieval} \\
\cmidrule(lr){4-7} \cmidrule(lr){8-11}
& & ($m$) & $T_m$ & $\frac{T_1}{T_m}$ & $\frac{t}{n}$ & Dist. & $T_m$ & $\frac{T_1}{T_m}$ & $\frac{t}{n}$ & Dist. \\
\midrule
\endhead

\midrule
\multicolumn{11}{r}{{Continued on next page}} \\
\bottomrule
\endfoot

\bottomrule
\endlastfoot

\footnotesize 

    & \multirow[t]{5}{*}{0} & 2  & 1248.16 & 1.95 & 0.01 & 1825.08 & 1064.30 & 1.91 & 0.01 & 1440.00 \\
    &                    & 4  & 656.78  & 3.70 & 0.02 & 1842.60 & 568.24  & 3.58 & 0.01 & 1447.56 \\
    &                    & 6  & 461.02  & 5.27 & 0.02 & 1861.38 & 404.58  & 5.03 & 0.02 & 1462.66 \\
    &                    & 8  & 369.08  & 6.58 & 0.03 & 1881.50 & 325.06  & 6.26 & 0.03 & 1478.72 \\
    &                    & 10 & 312.02  & 7.79 & 0.03 & 1886.80 & 282.26  & 7.21 & 0.04 & 1508.44 \\
    \cmidrule{2-11}
    & \multirow[t]{5}{*}{2} & 2  & 1217.06 & 1.96 & 0.01 & 1763.78 & 1082.06 & 1.90 & 0.01 & 1449.72 \\
    &                    & 4  & 641.60  & 3.71 & 0.01 & 1779.62 & 582.44  & 3.52 & 0.01 & 1456.56 \\
$10 \times 10$ &         & 6  & 454.74  & 5.24 & 0.02 & 1809.14 & 421.12  & 4.87 & 0.01 & 1472.38 \\
    &                    & 8  & 368.96  & 6.46 & 0.03 & 1849.42 & 339.26  & 6.05 & 0.01 & 1493.84 \\
    &                    & 10 & 313.42  & 7.60 & 0.04 & 1849.16 & 297.16  & 6.91 & 0.02 & 1522.84 \\
    \cmidrule{2-11}
    & \multirow[t]{5}{*}{4} & 2  & 1215.92 & 1.96 & 0.01 & 1762.20 & 1079.34 & 1.91 & 0.01 & 1453.42 \\
    &                    & 4  & 641.56  & 3.71 & 0.01 & 1779.52 & 583.98  & 3.52 & 0.01 & 1463.66 \\
    &                    & 6  & 457.16  & 5.20 & 0.02 & 1819.54 & 415.04  & 4.95 & 0.01 & 1472.12 \\
    &                    & 8  & 369.18  & 6.45 & 0.03 & 1847.48 & 333.12  & 6.17 & 0.02 & 1490.78 \\
    &                    & 10 & 310.14  & 7.67 & 0.04 & 1836.56 & 288.88  & 7.12 & 0.02 & 1518.96 \\
\midrule

    & \multirow[t]{5}{*}{0} & 3  & 2544.76 & 2.90 & 0.03 & 6024.98 & 2164.22 & 2.81 & 0.03 & 4736.66 \\
    &                    & 6  & 1343.10 & 5.49 & 0.04 & 6093.16 & 1159.20 & 5.24 & 0.03 & 4756.20 \\
    &                    & 9  & 941.84  & 7.82 & 0.05 & 6182.66 & 816.80  & 7.43 & 0.04 & 4790.66 \\
    &                    & 12 & 757.18  & 9.73 & 0.08 & 6267.14 & 646.28  & 9.39 & 0.08 & 4848.72 \\
    &                    & 15 & 643.46  & 11.45& 0.10 & 6298.60 & 550.36  & 11.03& 0.11 & 4898.60 \\
    \cmidrule{2-11}
    & \multirow[t]{5}{*}{3} & 3  & 2490.62 & 2.90 & 0.02 & 5855.36 & 2203.68 & 2.72 & 0.02 & 4666.98 \\
    &                    & 6  & 1308.98 & 5.52 & 0.03 & 5913.76 & 1197.72 & 5.01 & 0.02 & 4687.52 \\
$15 \times 15$ &         & 9  & 936.22  & 7.72 & 0.05 & 6064.60 & 853.42  & 7.03 & 0.03 & 4736.94 \\
    &                    & 12 & 753.30  & 9.60 & 0.08 & 6147.78 & 690.90  & 8.69 & 0.05 & 4792.98 \\
    &                    & 15 & 630.98  & 11.46& 0.11 & 6183.90 & 596.06  & 10.07& 0.06 & 4859.48 \\
    \cmidrule{2-11}
    & \multirow[t]{5}{*}{6} & 3  & 2484.02 & 2.92 & 0.02 & 5848.38 & 2211.74 & 2.76 & 0.02 & 4750.58 \\
    &                    & 6  & 1305.68 & 5.55 & 0.03 & 5911.20 & 1186.28 & 5.15 & 0.03 & 4751.50 \\
    &                    & 9  & 932.24  & 7.77 & 0.05 & 6075.76 & 839.42  & 7.27 & 0.03 & 4784.66 \\
    &                    & 12 & 755.02  & 9.59 & 0.09 & 6202.26 & 670.36  & 9.11 & 0.05 & 4847.66 \\
    &                    & 15 & 631.22  & 11.47& 0.11 & 6185.78 & 573.62  & 10.64& 0.06 & 4901.24 \\
\midrule

    & \multirow[t]{5}{*}{0} & 4  & 4279.02 & 3.85 & 0.05 & 14102.66 & 3666.68 & 3.68 & 0.05 & 11067.26 \\
    &                    & 8  & 2249.80 & 7.32 & 0.07 & 14259.30 & 1963.18 & 6.87 & 0.07 & 11133.18 \\
    &                    & 12 & 1579.62 & 10.42& 0.10 & 14481.82 & 1377.78 & 9.79 & 0.13 & 11197.10 \\
    &                    & 16 & 1278.64 & 12.87& 0.17 & 14711.82 & 1083.46 & 12.44& 0.22 & 11300.12 \\
    &                    & 20 & 1083.90 & 15.19& 0.21 & 14767.46 & 909.42  & 14.82& 0.33 & 11406.84 \\
    \cmidrule{2-11}
    & \multirow[t]{5}{*}{4} & 4  & 4191.68 & 3.87 & 0.04 & 13735.94 & 3761.70 & 3.52 & 0.04 & 10839.24 \\
    &                    & 8  & 2212.18 & 7.34 & 0.07 & 13910.62 & 2041.42 & 6.48 & 0.06 & 10923.46 \\
$20 \times 20$ &         & 12 & 1579.36 & 10.28& 0.11 & 14289.42 & 1462.12 & 9.04 & 0.07 & 11015.42 \\
    &                    & 16 & 1284.56 & 12.64& 0.20 & 14587.32 & 1171.96 & 11.28& 0.10 & 11131.72 \\
    &                    & 20 & 1069.88 & 15.17& 0.24 & 14596.62 & 993.34  & 13.31& 0.14 & 11251.76 \\
    \cmidrule{2-11}
    & \multirow[t]{5}{*}{8} & 4  & 4185.36 & 3.88 & 0.04 & 13702.20 & 3773.20 & 3.57 & 0.04 & 11104.10 \\
    &                    & 8  & 2205.94 & 7.35 & 0.07 & 13873.84 & 2014.32 & 6.69 & 0.05 & 11113.72 \\
    &                    & 12 & 1567.54 & 10.35& 0.11 & 14262.68 & 1421.76 & 9.48 & 0.06 & 11177.42 \\
    &                    & 16 & 1278.66 & 12.69& 0.19 & 14581.86 & 1125.38 & 11.98& 0.08 & 11259.78 \\
    &                    & 20 & 1060.70 & 15.30& 0.23 & 14539.18 & 949.56  & 14.19& 0.12 & 11366.64 \\
\midrule

    & \multirow[t]{5}{*}{0} & 5  & 6440.94 & 4.80 & 0.08 & 27295.38 & 5579.32 & 4.52 & 0.10 & 21475.06 \\
    &                    & 10 & 3385.04 & 9.14 & 0.12 & 27607.42 & 2974.60 & 8.48 & 0.16 & 21556.54 \\
    &                    & 15 & 2376.20 & 13.02& 0.19 & 28049.42 & 2083.24 & 12.11& 0.21 & 21679.50 \\
    &                    & 20 & 1927.18 & 16.06& 0.32 & 28511.14 & 1625.32 & 15.52& 0.38 & 21753.66 \\
    &                    & 25 & 1632.36 & 18.96& 0.40 & 28657.10 & 1364.64 & 18.49& 0.78 & 21987.68 \\
    \cmidrule{2-11}
    & \multirow[t]{5}{*}{5} & 5  & 6344.88 & 4.82 & 0.07 & 26695.08 & 5710.86 & 4.29 & 0.06 & 20739.52 \\
    &                    & 10 & 3352.28 & 9.13 & 0.14 & 27057.08 & 3096.76 & 7.90 & 0.08 & 20894.58 \\
$25 \times 25$ &         & 15 & 2397.14 & 12.76& 0.24 & 27845.02 & 2194.28 & 11.16& 0.13 & 21067.52 \\
    &                    & 20 & 1939.22 & 15.78& 0.39 & 28343.90 & 1753.94 & 13.96& 0.13 & 21230.36 \\
    &                    & 25 & 1617.78 & 18.91& 0.49 & 28549.90 & 1479.44 & 16.55& 0.70 & 21117.32 \\
    \cmidrule{2-11}
    & \multirow[t]{5}{*}{10} & 5  & 6333.78 & 4.84 & 0.07 & 26684.48 & 5715.42 & 4.38 & 0.07 & 21302.74 \\
    &                    & 10 & 3333.12 & 9.19 & 0.13 & 26981.10 & 3052.00 & 8.20 & 0.10 & 21361.28 \\
    &                    & 15 & 2370.84 & 12.92& 0.21 & 27687.82 & 2148.22 & 11.65& 0.14 & 21450.26 \\
    &                    & 20 & 1916.00 & 15.98& 0.35 & 28222.88 & 1690.86 & 14.80& 0.17 & 21603.88 \\
    &                    & 25 & 1586.22 & 19.31& 0.43 & 28205.54 & 1422.68 & 17.59& 0.28 & 21826.48 \\
\midrule

    & \multirow[t]{5}{*}{0} & 6  & 9037.86 & 5.76 & 0.12 & 46866.06 & 7906.92 & 5.35 & 0.15 & 36985.38 \\
    &                    & 12 & 4741.84 & 10.99& 0.18 & 47374.56 & 4198.40 & 10.08& 0.22 & 37071.42 \\
    &                    & 18 & 3328.48 & 15.65& 0.32 & 48168.60 & 2936.28 & 14.42& 0.65 & 37198.42 \\
    &                    & 24 & 2702.14 & 19.28& 0.54 & 48995.06 & 2273.46 & 18.62& 1.15 & 37319.12 \\
    &                    & 30 & 2287.92 & 22.77& 0.56 & 49220.14 & 1895.33 & 22.33& 1.31 & 37602.25 \\
    \cmidrule{2-11}
    & \multirow[t]{5}{*}{6} & 6  & 8914.50 & 5.78 & 0.11 & 45914.20 & 8152.98 & 5.03 & 0.10 & 35710.22 \\
    &                    & 12 & 4710.70 & 10.94& 0.22 & 46526.30 & 4414.92 & 9.29 & 0.14 & 35955.14 \\
$30 \times 30$ &         & 18 & 3365.44 & 15.31& 0.38 & 47877.88 & 3127.48 & 13.12& 0.17 & 36187.24 \\
    &                    & 24 & 2741.58 & 18.79& 0.71 & 48934.84 & 2489.34 & 16.48& 0.28 & 36414.38 \\
    &                    & 30 & 2269.24 & 22.70& 0.83 & 49104.20 & 2110.82 & 19.44& 0.32 & 36721.08 \\
    \cmidrule{2-11}
    & \multirow[t]{5}{*}{12} & 6  & 8912.64 & 5.79 & 0.11 & 45884.00 & 8104.24 & 5.17 & 0.11 & 36539.04 \\
    &                    & 12 & 4688.54 & 11.00& 0.21 & 46441.36 & 4331.82 & 9.66 & 0.15 & 36610.00 \\
    &                    & 18 & 3339.64 & 15.44& 0.36 & 47785.64 & 3029.48 & 13.82& 0.21 & 36767.94 \\
    &                    & 24 & 2715.76 & 18.99& 0.65 & 48800.34 & 2384.30 & 17.56& 0.27 & 37012.88 \\
    &                    & 30 & 2233.52 & 23.09& 0.72 & 48722.08 & 1999.06 & 20.94& 0.29 & 37311.16 \\

\end{longtable}

\section{Optimal Solver}
\paragraph{Joint State Space.}
The system state at timestep $t$ is defined as a tuple $S_t = \langle {R}_t, {L}_t \rangle$. Here, $L_t$ represents the current grid configuration (load locations), and $R_t = \{r_1, \dots, r_m\}$ represents the joint configuration of the $m$ robots. Each robot state $r_i$ is a tuple $(v, \theta, \beta)$, where $v$ is its position, $\theta$ is its direction, and $\beta\in \{0,1\}$ indicates whether the robot is carrying a load. 
\paragraph{Transitions.}
Transitions between states represent a single discrete timestep where each robot performs a primitive action $u \in \{\text{Move},\allowbreak \text{Turn},\allowbreak \text{Wait},\allowbreak \text{Pick},\allowbreak \text{Drop}\}$. A transition $S_t \rightarrow S_{t+1}$ is valid if and only if it corresponds with a valid robot or conveyor action. Drop actions are restricted until robots reach the load's goal state. In the storage phase, loads can only be dropped at their storage location, and in retrieval, loads can only be dropped on the I/O row. That is, drop-off and pick-up relays to store or retrieve a package are disallowed. To reduce the branching factor, any timestep where all robots wait with no conveyor action is disallowed. The conveyor follows deterministic rules based on batch mechanics (storage) or the departure schedule (retrieval), contributing zero additional branching factor to the search. Additionally, loads in the storage grid that are not part of the arrival or departure batch are considered immovable obstacles. 

\subsection{Heuristic}

Let $U(s)$ be the unfinished loads that are not currently carried in state $s$, and let $C(s)$ be the unfinished loads that are currently carried. For $\ell \in U(s)$, let $p_\ell$ be its pickup cell. For storage, this is its current or designated future pickup cell on the I/O row. For retrieval, this is its current cell in the grid.

For each unfinished load $\ell$, let $G_\ell$ be the set of oriented robot configurations at which $\ell$ may legally be dropped. For storage, $G_\ell$ consists of the four orientations at the assigned cell $A[\ell]$. For retrieval, it consists of the four orientations at every I/O-row cell. This uses the restriction that a load may be dropped only at a legal destination.

Let $Q(p)$ be the four oriented configurations at cell $p$. Let $q_r(s)$ denote the current oriented configuration of robot $r$ in state $s$ and $r(\ell)$ the robot that is currently carrying $\ell$. A robot that is not currently carrying is called empty. Let $d(X, Y)$ be the minimum number of move and turn actions required to move one robot from some configuration in set $X$ to some configuration in set $Y$, assuming an empty grid. 

For each $\ell \in U(s)$, define
\[
O_\ell(s) = \{ q_r(s) : r \text{ is currently free} \} \cup \bigcup_{\substack{\ell' \in U(s) \cup C(s) \\ \ell' \neq \ell}} G_{\ell'}.
\]

A robot that eventually picks up $\ell$ starts its empty-approach either from its current configuration if it is currently empty or from the destination where it previously dropped another load if it is currently carrying. Therefore, define
\[
W_{\text{pickup}}(s) = \sum_{\ell \in U(s)} d(O_\ell(s), Q(p_\ell)).
\]

The mandatory work for carrying and completing the remaining loads is
\[
W_{\text{carry}}(s) = \sum_{\ell \in U(s)} \left( 2 + d(Q(p_\ell), G_\ell) \right) + \sum_{\ell \in C(s)} \left( 1 + d(q_{r(\ell)}(s), G_\ell) \right),
\]
where $r(\ell)$ is the robot currently carrying $\ell$. An uncarried load is charged one pickup and one drop. A carried load is charged only its remaining motion and one drop.

The heuristic is
\[
h(s) =  \frac{W_{\text{pickup}}(s) + W_{\text{carry}}(s)}{m} .
\]

\subsection{Proof of admissibility}

\begin{theorem}
The heuristic $h$ never exceeds the minimum remaining makespan.
\end{theorem}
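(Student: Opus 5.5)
The plan is a work-counting argument. In any feasible continuation of length $T$ from state $s$, each of the $m$ robots performs at most one action per timestep. So the total number of robot actions over the remaining horizon is at most $mT$. It therefore suffices to show that any valid completion from $s$ uses at least $W_{\text{pickup}}(s)+W_{\text{carry}}(s)$ robot actions of type move, turn, pick, or drop. Dividing by $m$ then gives $h(s)\le T$. This holds for the optimal $T$ in particular, since $h$ lower-bounds the per-robot average workload and the makespan is at least the maximum (hence the average) workload.

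\textbf{Step 1 ($W_{\text{carry}}$).} Fix an optimal completion. Because relays are disallowed (the transition rules permit a drop only at a legal destination), every $\ell\in U(s)$ is picked up exactly once, at $p_\ell$, and carried by a single robot to some configuration in $G_\ell$, where it is dropped.
\begin{itemize}
\item The carried motion consists of move and turn actions taking the robot from some configuration in $Q(p_\ell)$ to some configuration in $G_\ell$. Its length is at least $d(Q(p_\ell),G_\ell)$, since obstacles only lengthen paths relative to the empty grid.
\item Adding one pick and one drop gives $2+d(Q(p_\ell),G_\ell)$ for each $\ell\in U(s)$.
\item Similarly, each $\ell\in C(s)$ requires at least $1+d(q_{r(\ell)}(s),G_\ell)$ actions by its current carrier.
\end{itemize}
These action sets are disjoint across loads, because a robot carries at most one load at a time.

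\textbf{Step 2 ($W_{\text{pickup}}$).} For $\ell\in U(s)$, consider the robot $r$ that picks up $\ell$, and the empty-approach segment of $r$'s actions immediately preceding that pickup. This segment starts either at $r$'s configuration in $s$ (if $r$ has not delivered anything since $s$), or at the configuration where $r$ most recently dropped some other load $\ell'$.
\begin{itemize}
\item In the first case, $r$ must be currently empty (free), since a carrying robot must drop before picking up again.
\item In the second case, $\ell'\ne\ell$ is unfinished in $s$, so $\ell'\in U(s)\cup C(s)$, and the drop configuration lies in $G_{\ell'}$.
\end{itemize}
In both cases the start lies in $O_\ell(s)$. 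The segment ends in $Q(p_\ell)$ and contains only moves, turns, and waits, so it uses at least $d(O_\ell(s),Q(p_\ell))$ move and turn actions. These segments are disjoint from each other, since each belongs to a distinct pickup event. They are also disjoint from the carried segments and pick/drop actions counted in Step 1, because the robot is empty during the approach.

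\textbf{Step 3.} Summing over loads, the total work is at least $W_{\text{pickup}}+W_{\text{carry}}$, which is at most $mT^*$, and the bound follows.

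The main obstacle is Step 2: the case analysis ensuring that every approach start really lies in $O_\ell(s)$. One subtlety is a currently carrying robot that later picks up $\ell$. Its approach starts from its drop of the carried load $\ell'\in C(s)$, which is covered by $G_{\ell'}$. Another subtlety concerns the conveyor in storage, since $p_\ell$ is a designated future pickup cell; here I would use the convention that the load is pushed onto $p_\ell$ and picked from there. Finally, the counting must not double-charge the pickup action, which belongs to $W_{\text{carry}}$ rather than to the approach.
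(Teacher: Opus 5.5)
Your proposal is correct and follows the paper's argument essentially step for step. The paper likewise lower-bounds the non-wait actions of any feasible completion by $W_{\text{carry}}(s)$ plus $W_{\text{pickup}}(s)$, with one empty approach per uncarried load starting either at a currently empty robot's configuration or at a drop configuration in some $G_{\ell'}$; it then uses disjointness of these action sets and divides $N_\sigma \le mT$ by $m$. Two points the paper leaves implicit are made explicit in your version: a robot that has not delivered since $s$ must currently be empty, and the pickup action is charged to $W_{\text{carry}}$ rather than to the approach.
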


\begin{proof}
Fix a state $s$, and consider any feasible completion $\sigma$ of makespan $T$. Let $N_\sigma$ be the total number of non-wait robot actions in $\sigma$.

Consider an uncarried load $\ell \in U(s)$. Its serving robot must pick it up, carry it from $p_\ell$ to a legal destination, and drop it. This requires at least
\[
2 + d(Q(p_\ell), G_\ell)
\]
non-wait actions.

Consider instead a carried load $\ell \in C(s)$. Its current robot must move from its present configuration to a legal destination and drop it. This requires at least
\[
1 + d(q_{r(\ell)}(s), G_\ell)
\]
non-wait actions. Thus, the completion contains at least $W_{\text{carry}}(s)$ carry, pickup, and drop actions.

Now fix an uncarried load $\ell \in U(s)$, and inspect the robot that eventually picks it up. If that robot has not delivered another load since state $s$, its empty-approach begins at its current empty configuration. Otherwise, its empty-approach begins immediately after its most recent earlier drop, at a configuration in $G_{\ell'}$ for some $\ell' \neq \ell$. In both cases, the approach begins in $O_\ell(s)$ and ends in $Q(p_\ell)$. It therefore contains at least
\[
d(O_\ell(s), Q(p_\ell))
\]
move and turn actions. Summing over all uncarried loads gives at least $W_{\text{pickup}}(s)$ empty-approach actions.

The empty-approach portions and the load-delivery portions are pairwise action-disjoint. For each robot, they occur in alternating order: approach a load, pick it up, deliver it, and then approach another load. Consequently,
\[
N_\sigma \ge W_{\text{pickup}}(s) + W_{\text{carry}}(s).
\]

Because each joint timestep contains at most one non-wait action per robot,
\[
N_\sigma \le mT.
\]

Therefore,
\begin{align*}
W_{\text{pickup}}(s) + W_{\text{carry}}(s) &\le mT, \\[1ex]
h(s) = \frac{W_{\text{pickup}}(s) + W_{\text{carry}}(s)}{m} &\le T.
\end{align*}
This holds for every feasible completion, so $h(s) \le h^*(s)$.
\end{proof}

\section{Full Completeness Proof}
\retrieval*
\label{app:extended_proofs}
\begin{proof}
We now show that there exists a cell $c$ that $r$ can reach from the I/O row and remain idle at such that no subsequent retrieval paths are blocked.
We have at most $\cols-1$ other robots idle in \storage.
Let $L$ denote the set of loads that are above these robots.
For $\ell \in L$, consider the shortest eventual retrieval path $\pi_{\ell}$ from $\storearr[\ell]$ to the I/O row, which is guaranteed to exist.
It suffices to ensure that $c$ is not located on any of the $\pi_{\ell}$'s since any retrieval path for a $\ell' \notin L$ must traverse a cell occupied by a load in $L$.
This is true because otherwise $\ell'$ has a robot-free path to the I/O row, which $r$ could use to go under $\ell'$---a contradiction.

Each $\pi_{\ell}$ exits \storage via a certain column, resulting in at most $\cols -1$ such exit columns.
We can therefore pick a non-exit column $K$. 
We argue that $K$ is empty and that the backmost cell on $K$ is a valid choice for $c$:
Suppose, for a contradiction, that there is a load $\ell'$ on $K$, chosen to be the bottommost load on $K$.
Then, we have $\ell' \in L$ (as otherwise $r$ could go under it), and because there are no loads below it, $\pi_{\ell'}$ must exit \storage via $K$, which contradicts the choice of $K$.
Therefore, $K$ is empty, which means that no $\pi_{\ell}$ passes through $K$:
Any $\pi_{\ell}$ passing through $K$ would also have to exit \storage via $K$ as that would be the shortest path, which again contradicts the choice of $K$.
Therefore, to summarize, we have an empty cell $c$ that $r$ can go to without blocking subsequent retrievals, which concludes this case.
Finally, we remark that for subsequent retrievals, we consider $K$ as one of the at most $C-1$ exit columns if $r$ is still idle on it, which always allows finding another empty column using the same arguments.
\end{proof}
 
\end{document}